\documentclass[sigconf]{aamas} 

\usepackage{balance} % for balancing columns on the final page

\usepackage[utf8]{inputenc} % allow utf-8 input
\usepackage[T1]{fontenc}    % use 8-bit T1 fonts
\usepackage{url}            % simple URL typesetting
\usepackage{booktabs}       % professional-quality tables
\usepackage{amsfonts}       % blackboard math symbols
\usepackage{nicefrac}       % compact symbols for 1/2, etc.
\usepackage{microtype}      % microtypography
\usepackage{xcolor}         % colors

\usepackage{amsmath}
\usepackage{mathtools}
\usepackage{amsthm}
\usepackage{dsfont}
\usepackage{enumitem}
\usepackage{upgreek}

\usepackage{caption}
\usepackage{subcaption}

\usepackage{algorithm}
\usepackage{algpseudocode}

\usepackage{lscape}
\usepackage{multirow}

\usepackage{subfiles}

\doi{CKXI1923}

\makeatletter
\gdef\@copyrightpermission{
  \begin{minipage}{0.2\columnwidth}
   \href{https://creativecommons.org/licenses/by/4.0/}{\includegraphics[width=0.90\textwidth]{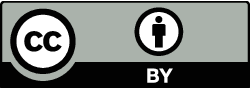}}
  \end{minipage}\hfill
  \begin{minipage}{0.8\columnwidth}
   \href{https://creativecommons.org/licenses/by/4.0/}{This work is licensed under a Creative Commons Attribution International 4.0 License.}
  \end{minipage}
  \vspace{5pt}
}
\makeatother

\setcopyright{ifaamas}
\acmConference[AAMAS '26]{Proc.\@ of the 25th International Conference
on Autonomous Agents and Multiagent Systems (AAMAS 2026)}{May 25 -- 29, 2026}
{Paphos, Cyprus}{C.~Amato, L.~Dennis, V.~Mascardi, J.~Thangarajah (eds.)}
\copyrightyear{2026}
\acmYear{2026}
\acmDOI{}
\acmPrice{}
\acmISBN{}

\title{General Flexible $f$-divergence for Challenging Offline RL Datasets with Low Stochasticity and Diverse Behavior Policies}

\author{Jianxun Wang}
\affiliation{
  \institution{North Carolina State University}
  \city{Raleigh}
  \country{United States}}
\email{jwang75@ncsu.edu}

\author{Grant C. Forbes}
\affiliation{
  \institution{North Carolina State University}
  \city{Raleigh}
  \country{United States}}
\email{gforbes@alumni.ncsu.edu}

\author{Leonardo Villalobos-Arias}
\affiliation{
  \institution{North Carolina State University}
  \city{Raleigh}
  \country{United States}}
\email{lvillal@ncsu.edu}

\author{David L. Roberts}
\affiliation{
  \institution{North Carolina State University}
  \city{Raleigh}
  \country{United States}}
\email{dlrober4@ncsu.edu}

\begin{abstract}
Offline RL algorithms aim to improve upon the behavior policy that produces the collected data while constraining the learned policy to be within the support of the dataset. However, practical offline datasets often contain examples with little diversity or limited exploration of the environment, and from multiple behavior policies with diverse expertise levels. Limited exploration can impair the offline RL algorithm's ability to estimate \textit{Q} or \textit{V} values, while constraining towards diverse behavior policies can be overly conservative. Such datasets call for a balance between the RL objective and behavior policy constraints. We first identify the connection between $f$-divergence and optimization constraint on the Bellman residual through a more general Linear Programming form for RL and the convex conjugate. Following this, we introduce the general flexible function formulation for the $f$-divergence to incorporate an adaptive constraint on algorithms' learning objectives based on the offline training dataset. Results from experiments on the MuJoCo, Fetch, and AdroitHand environments show the correctness of the proposed LP form and the potential of the flexible $f$-divergence in improving performance for learning from a challenging dataset when applied to a compatible constrained optimization algorithm.
\end{abstract}

\keywords{Offline Reinforcement Learning, f divergence, LEARN}

\newcommand{\BibTeX}{\rm B\kern-.05em{\sc i\kern-.025em b}\kern-.08em\TeX}

\DeclareMathOperator*{\argmax}{argmax}

\newtheorem{theorem}{Theorem}

\newtheorem*{remark}{Remark}

\begin{document}

%%% The following commands remove the headers in your paper. For final 
%%% papers, these will be inserted during the pagination process.

\pagestyle{fancy}
\fancyhead{}

%%% The next command prints the information defined in the preamble.

\maketitle 

%%%%%%%%%%%%%%%%%%%%%%%%%%%%%%%%%%%%%%%%%%%%%%%%%%%%%%%%%%%%%%%%%%%%%%%%

% NOTE:
%   1. It's probably more accurate to describe the generalized LP form as generalized constrained optimization, because the constraints are arbitrary $f$-divergence function and most likely non-linear.

%paper paragraphs
\section{Introduction}

Recent progress in offline Reinforcement Learning (RL) significantly improved RL algorithms' practical value by alleviating the bottleneck problem of RL model training, which is the need for online environment interaction. The capability of training an RL agent on a static dataset opens up application opportunities such as AI-conversation and robot manipulation~\citep{singh2023chat, ibarz2021train}. A key idea behind the general design principle of offline RL algorithms is to optimize the learning policy with the given reward while constraining it to stay within the support of the static dataset~\citep{levine2020offline, rashidinejad2021bridging}. Algorithms designed following this pessimism principle~\citep{kumar2020conservative, kostrikov2021offline, fujimoto2021minimalist} have demonstrated considerable performance in benchmarks such as D4RL~\citep{fu2020d4rl}.

However, the implicit exploration setting and the limited combination of behavior policies suggest a gap between the characteristics of datasets from the benchmark and the real-world application. Practical considerations~\citep{murphy2001marginal, kiran2021deep} like safety and costs of experiments can restrict the practitioner who collects the data to follow an almost deterministic or even rule-based behavior. The limited exploration problem in the offline RL setting can be more severe. In addition, applications of RL may call for large-scale data collection from different practitioners or crowd-sourced data collection~\citep{walke2023bridgedata}. Such practices can lead to datasets with a high mixture of expertise levels, even for learning a single skill. 

\begin{figure*}[t]
    \centering
    \includegraphics[width=.33\textwidth]{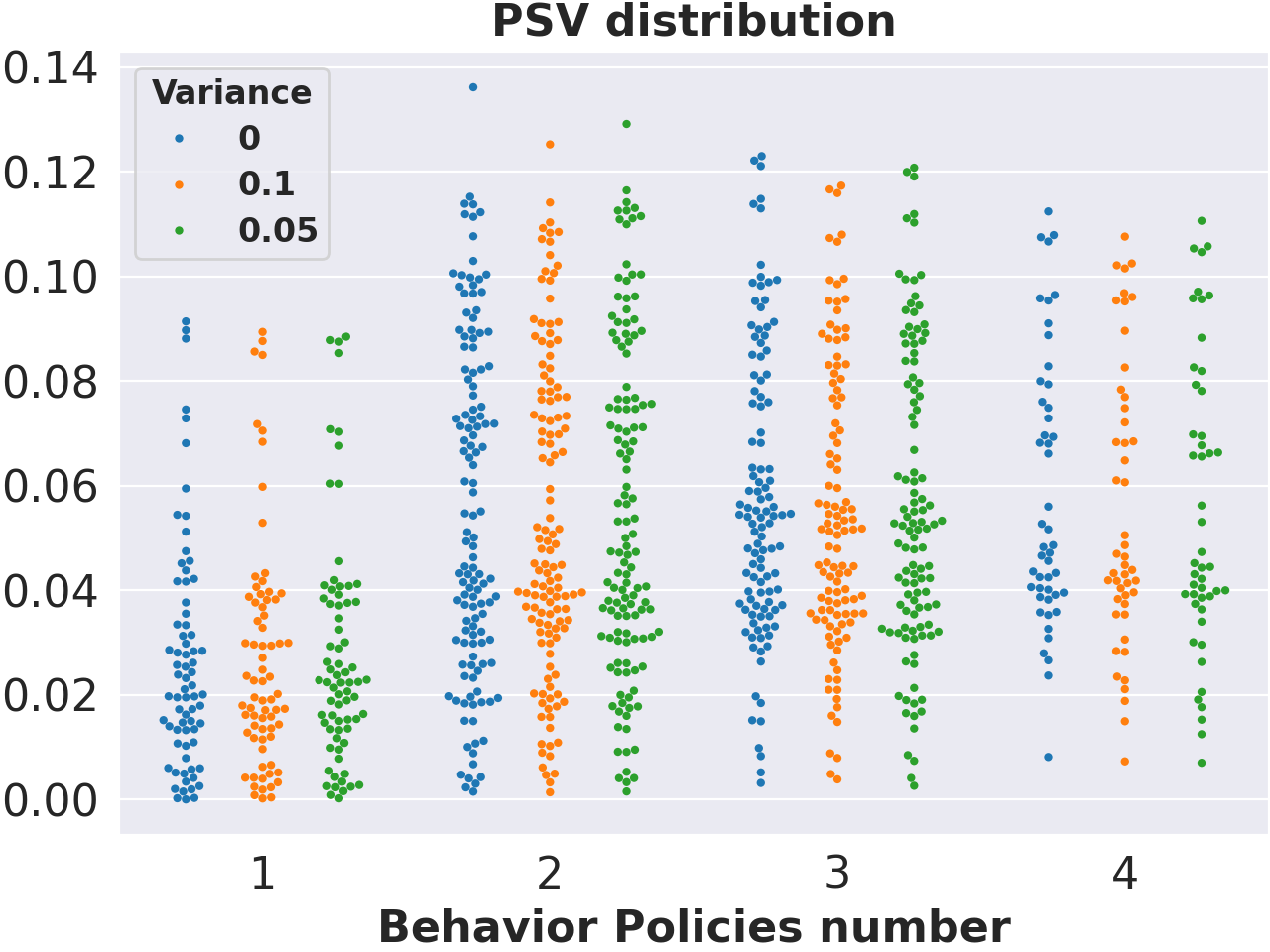}
    \includegraphics[width=.33\textwidth]{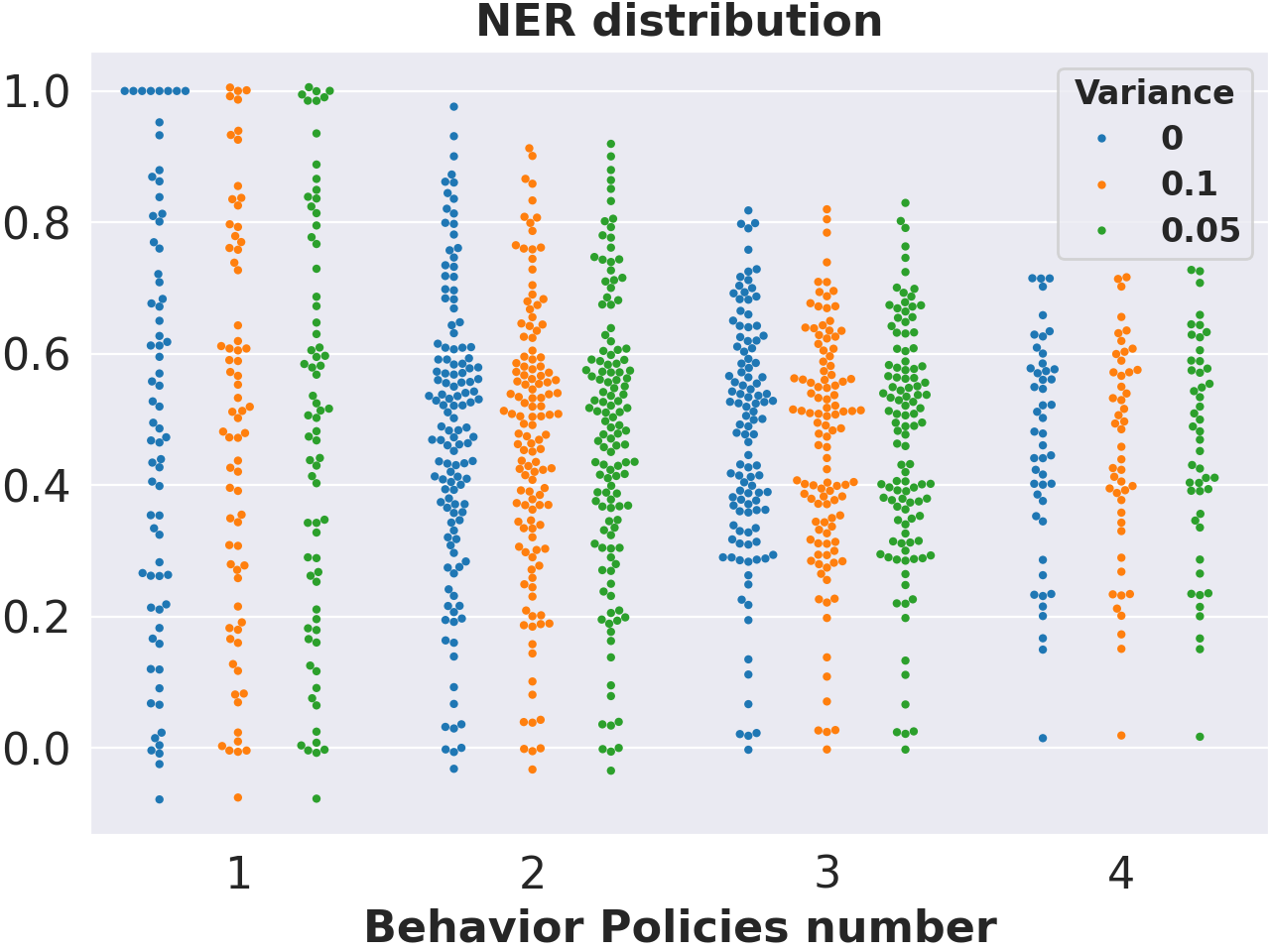}
    \includegraphics[width=.33\textwidth]{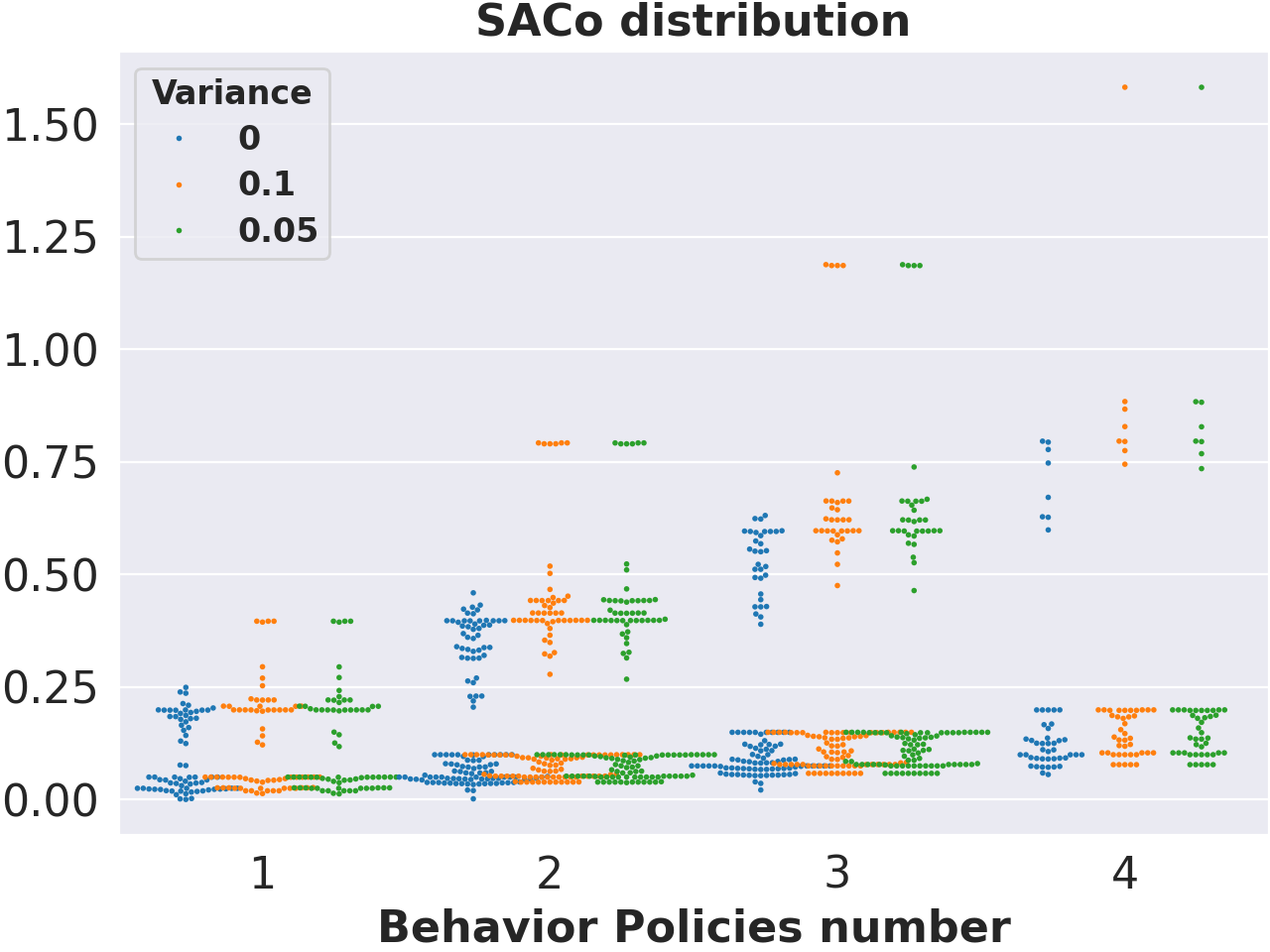}
    
    \medskip
    
    \includegraphics[width=.33\textwidth]{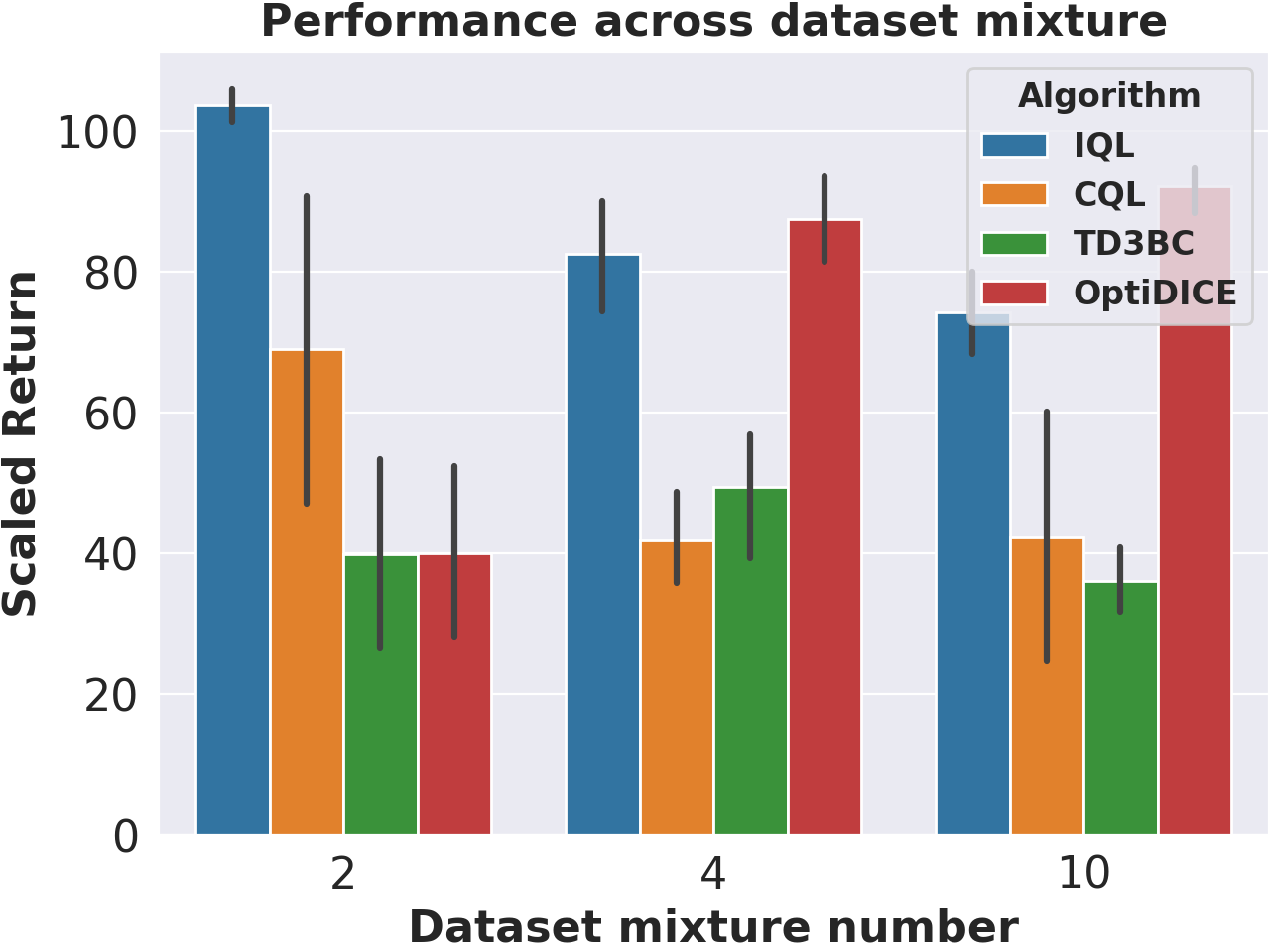}\quad
    \includegraphics[width=.33\textwidth]{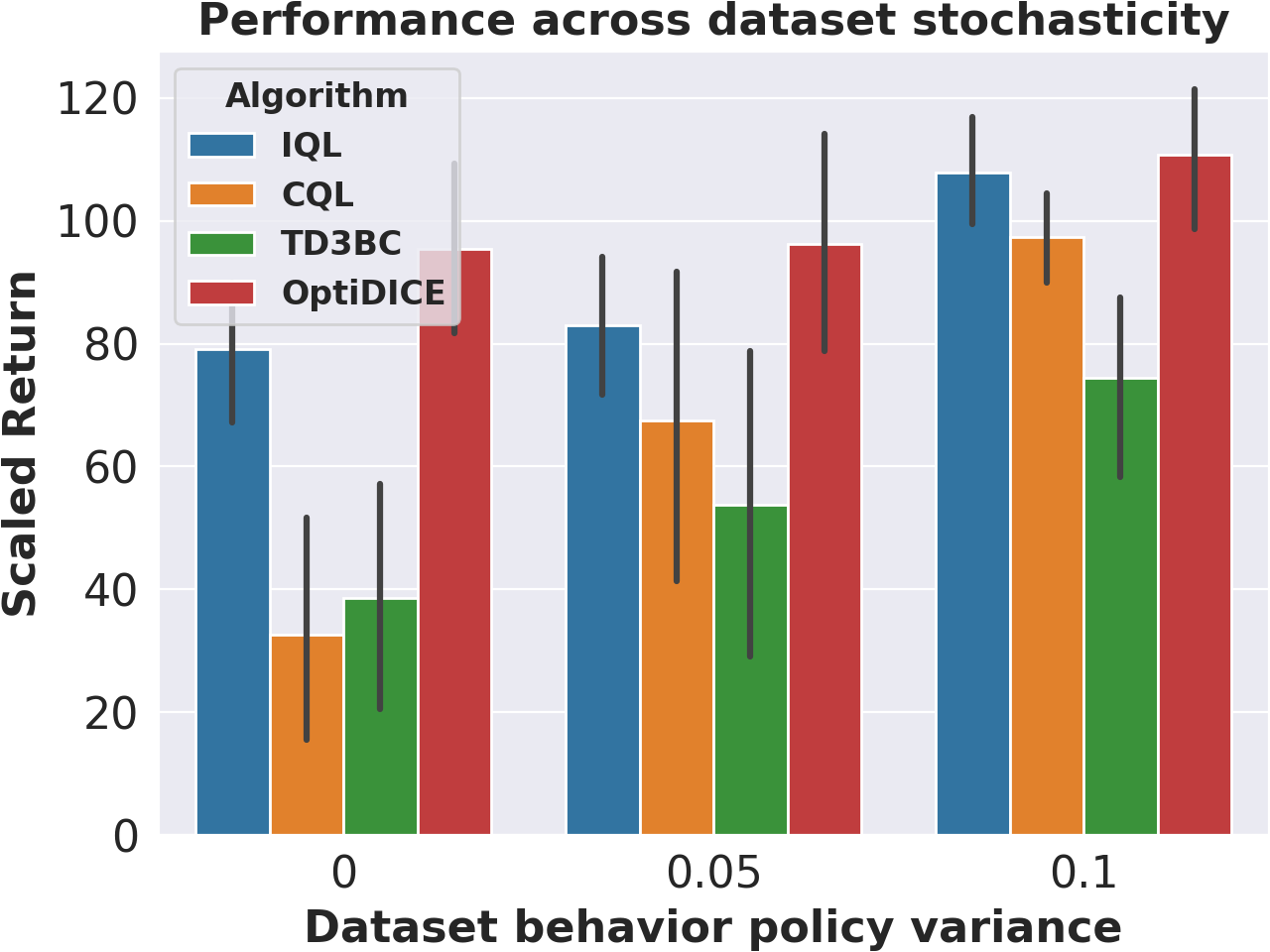}
    
    \caption{Dataset measurements across different compositions and algorithms' performance in them. (Top) Positive Scaled Variance (PSV) of the return, Normalized Expected Return (NER), and SACo as a measure of exploration; (Bottom) Algorithm performances across different dataset mixture and behavior policy stochasticity.}
    \label{fig:data_stats_and_algo_perf}
\end{figure*}
While previous works~\citep{schweighofer2021dataset, swazinna2021measuring} have explored characterizing offline RL datasets in terms of trajectories' performance and the amount of explorations present in the dataset, these measurements may only reflect dataset properties on the distribution level at best, and cannot identify any individual dataset. As the top row in Figure~\ref{fig:data_stats_and_algo_perf} shows, both measurements on dataset returns (PSV, NER) and dataset exploration (SACo) exhibit significant overlap among different settings, in terms of the number of behavior policies and behavior policy stochasticity. This makes it impossible to reliably tell what a single dataset's setting is from those measurements. Furthermore, as the bottom row in Figure~\ref{fig:data_stats_and_algo_perf} shows, each algorithm can suffer significant performance loss under different dataset mixtures and with reduced stochasticity. These observations suggest that two offline RL datasets can appear empirically similar but differ drastically in the learning algorithms' performance. 

One of the potential causes of this performance degradation can be the algorithm's universal pessimism towards the dataset. Different expertise levels and stochasticity levels of behavior policies call for different constraints. Previous work on heteroscedastic datasets~\citep{singh2022offline} and mixtures of two behavior policies~\citep{hong2023harnessing} share our opinion. The former study applied a relaxed constraint in the policy evaluation, while the latter re-weighted their learning data samples by trajectory returns. \citet{yu2023offline} focused on imitation learning by applying a relaxed $f$-divergence using a small expert dataset and a larger unlabeled dataset.

In this paper, we focus on using the $f$-divergence to control constraint level. We first highlight the implicit role of $f$-divergence for any RL algorithms that minimize the Bellman error by generalizing the unconstrained optimization derived from the Linear Programming (LP) formulation of RL and connecting it to traditional RL algorithms. We then show a unified form of LP for RL as optimizing the sum of an initial loss term and an $f$-divergence penalty on either the density ratio estimation or the estimated value, under the Bellman constraint. These theoretical results show that RL algorithms naturally incorporate the $f$-divergence against the training data, either explicitly or implicitly. Inspired by these results, we propose the flexible $f$-divergence, a general function form for $f$-divergence that is applicable to both traditional Bellman error minimization algorithms and dual optimization algorithms based on density ratio estimation, to enable flexible constraint control.  
We show that the Flexible $f$-divergence can achieve improved performance over its base algorithms in challenging settings where base algorithms fail.  

% For AAMAS 2026 full submission
% The extended version of this paper with appendix is available at https://arxiv.org/abs/xxxx.xxxxx.
% For arxiv version
This paper is the extended version of the paper submitted to AAMAS 2026, including the complete appendix.

\section{Background}

%RL problem formulation
The traditional Reinforcement Learning problem formulates the agent-environment interaction as a Markov Decision Process (MDP) \citep{sutton2018reinforcement}. The MDP can be defined as a tuple $\mathcal{M} = (\mathcal{S}, \mathcal{A}, \mathcal{T}, p_0, r, \gamma)$, with a state space $\mathcal{S}$, an action space $\mathcal{A}$, a state transition dynamics $\mathcal{T}:\mathcal{S}\times\mathcal{A}\rightarrow\Delta(\mathcal{S})$\footnote{$\Delta(\mathcal{S})$ is the space of probability distribution over $\mathcal{S}$~\cite{agarwal2019reinforcement}.}, a reward function $r: \mathcal{S}\times\mathcal{A}\rightarrow\mathbb{R}$, an initial state distribution $p_0$, and a discount factor $\gamma \in [0, 1]$. The agent's policy $\pi(a_t|s_t)$ is the conditional probability of action to take with the given state. 
With ${s_0\sim p_0, a_t\sim\pi(a_t|s_t), s_{s+1}\sim\mathcal{T}(s_{t+1}|s_t, a_t)}$, the expected discounted return is defined as $\mathbb{E}_{\pi}[\sum_{t=0}^{\infty}\gamma^{t}r(s_t, a_t)]$. Practical evaluations of the expected return for a given policy $\pi$ are the action-value function $Q^\pi(s, a)=r(s,a)+\gamma\mathbb{E}_{a'\sim\pi, s'\sim\mathcal{T}}[Q^\pi(s', a')]$ in the Bellman equation format and the state value function $V^\pi(s)=\mathbb{E}_{a\sim\pi}[Q^\pi(s, a)]$. The goal of an RL algorithm is to find the optimal policy $\pi^*$ that maximizes the expected return, $\pi^*=\argmax_\pi\mathbb{E}_{\pi}[\mathcal{R}]$, whose value function and action-value function are $V^*$ and $Q^*$.

In the offline RL setting, the RL algorithm seeks to optimize the policy using a static dataset $\mathcal{D}=\{(s, a, s', r)\}_{i=1}^N$ without access to the actual environment. To provide a measure of data sample distribution, we can express the visitation of state-action pairs as the state-action occupancy measure $d^{\pi}(s, a):=(1-\gamma)\mathbb{E}_\pi[\sum_{t}\gamma^{t}P(s_t, a_t)]$ and the state occupancy measure $d^{\pi}(s)=\sum_a d^{\pi}(s, a)$. $d^\mathcal{D}(s,a)$ and $d^\mathcal{D}(s)$ are occupancy measure of the dataset.

% RL in LP format and convert to Fenchel duality and connection to Langarange duality
The optimization of the RL problem can be formulated as a Linear Programming (LP) problem of the $V$ function ($V$-LP) with the following as the primal problem~\cite{puterman2014markov}:
\begin{align}
    \label{eq:rl_lp_primal}
    &\min_{\nu}(1-\gamma)p_0(s)\nu(s)\\
    \label{eq:rl_lp_primal_constraint}
    &\text{s.t. } (\mathcal{B}\nu)(s, a)\geq r(s, a) + \gamma (\mathcal{T}\nu)(s, a), \forall s, a
\end{align}
where $(\mathcal{T}\nu)(s, a)=\sum_{s'}\mathcal{T}(s'|s, a)\nu(s')$ and $(\mathcal{B}\nu)(s, a)=\nu(s)$.
The intuition behind the primal problem is to minimize the upper bound of $V$ since it is the fixed point solution of the Bellman equation upon convergence. Therefore the solution of $\nu(s)$ would be $V^*(s)$. The following optimization problem is the more common dual optimization form of the primal problem. 
\begin{align}
    \label{eq:rl_lp_dual}
    &\max_{d\geq 0} d(s,a)r(s, a) \\ 
    \label{eq:rl_lp_dual_constraint}
    &\text{s.t. } (\mathcal{B}_{*}d)(s)=(1-\gamma)p_0(s) + \gamma (\mathcal{T}_{*} d)(s), \forall s
\end{align}
where $(\mathcal{T}_{*} d)(s)=\sum_{\bar{s},\bar{a}}\mathcal{T}(s|\bar{s}, \bar{a})d(\bar{s}, \bar{a})$ and $(\mathcal{B}_{*}d)(s)=\sum_a d(s,a)$. The dual problem has a more natural interpretation as maximizing the expected return while ensuring the occupancy measure satisfies the Bellman flow constraint~\ref{eq:rl_lp_dual_constraint}. 

To simplify notation, we define the TD error (or equivalently, the advantage estimation) as
\begin{align}
    e_\nu(s,a) &= 
    \underbrace{r(s, a) + \gamma(\mathcal{T}\nu)(s, a)}_{Q(s, a)}
    - \nu(s).
\end{align} 
We also define the density ratio estimation of occupancy measures between $d(s, a)$ and $d^\mathcal{D}(s, a)$ as $\zeta(s, a) = \frac{d(s, a)}{d^\mathcal{D}(s, a)}$.  The LP problem results in the Lagrangian below and its dual optimization problem:
\begin{align}
    \label{eq:rl_lag_equation}
    \max_{\zeta \geq 0}\min_{\nu} & \sum_{s,a}[
        \underbrace{(1-\gamma)p_0(s)\nu(s)}_{L_P}
        + \underbrace{\zeta(s,a)r(s,a)}_{L_D} \notag\\
    + &\gamma\zeta(s,a)(\mathcal{T}\nu)(s, a) - \zeta(s,a)\nu(s)].
\end{align}
The summation $\sum_{s, a}[\cdot]$ is only possible in theory, and it implicitly assumes a universal data occupancy measure, i.e., $d^\mathcal{D}(s, a)=1$. In practical offline RL settings, we substitute it with $\mathbb{E}_{(s, a)\sim d^{\mathcal{D}}}[\cdot]$. 
To accommodate the offline RL setting, one common approach is to augment the dual problem's maximization problem (Eq.~\ref{eq:rl_lp_dual}) and correspondingly the Lagrange equation with a negative $f$-divergence between $d$ and $d^\mathcal{D}$ as $D_f(\frac{d(s, a)}{d^\mathcal{D}(s, a)})=\mathbb{E}_{d^\mathcal{D}}[f(\frac{d(s, a)}{d^\mathcal{D}(s, a)})]$. However, this added regularization creates a disconnection with the primal problem. We will show that the constraint term can be naturally derived from the primal problem under a relaxed condition and is important in the optimization when using challenging datasets.

Our theoretical results also use the conjugate of the function~\cite{boyd2004convex}
\begin{align}
    \label{eq:fenchel_conjugate}
    g(y) = \!\!\!\sup_{x\in\text{dom~} g^*} \!\!\!y*x - g^*(x).
\end{align}
If $g(x)$ is a convex function, the conjugate of its conjugate is itself. We can evaluate the optimal $x$ value by taking the RHS of Eq.~\ref{eq:fenchel_conjugate} and setting its partial derivative over $x$ to $0$.
\begin{align}
    \label{eq:optimal_conjugate_value}
    x^* = g^{*'-1}(y)
\end{align}

\section{The General LP formulation for RL}
\label{sect:general_contrained_lp_form_for_rl}

Here, we provide a general view of RL in the format of Linear Programming to highlight the role of $f$-divergence in RL, either explicitly or implicitly. We first connect the traditional RL algorithms, whose optimization aims to minimize the Bellman residual through some loss function, to the LP formulation. Then, we draw the connection from the dual problem (Eq.~\ref{eq:rl_lp_dual}) with the $f$-divergence constraint to a modified version of its primal problem (Eq.~\ref{eq:rl_lp_primal}). 

\subsection{Alternative form of $L_P$}

The original $L_P$ in RL's LP form is defined as $\alpha(s)\nu(s)$~\cite{puterman2014markov}. In Eq.~\ref{eq:rl_lp_primal}, $\alpha(s)=(1-\gamma)p_0(s)$. In a given MDP, $p_0$ is stationary and therefore $(1-\gamma)p_0(s)$ remains constant. This suggests that additional options such as $(1-\gamma)\nu(s)$ may also be valid for $L_P$.

A less obvious form for $L_P$ is $\mathbb{E}_{(s, a)\sim\mathcal{D}}[-e_\nu(s, a)]$, or simplified as $-e_\nu(s, a)$ since the offline algorithm will sample from the dataset by definition. By comparing the value function of the dataset $V^{\mathcal{D}}(s)$ and $\nu(s)$, we can derive $e_\nu(s,a)$ and transform it to a valid $L_P$ option using the performance difference lemma~\cite{agarwal2019reinforcement}:
\begin{align}
    \label{eq:rl_perform_diff_lemma}
    &V^{\mathcal{D}}(s) - \nu(s) = \frac{1}{1-\gamma}\mathbb{E}_{(s, a)\sim\mathcal{D}}[e_\nu(s, a)] \\
    \label{eq:ev_as_Lp_equation}
    &(1-\gamma)\nu(s) = (1-\gamma)V^{\mathcal{D}}(s) - \mathbb{E}_{(s, a)\sim\mathcal{D}}[e_\nu(s, a)].
\end{align}
Since the LHS of Eq.~\ref{eq:ev_as_Lp_equation} is valid for $L_P$, its RHS is equivalently valid. While $V^{\mathcal{D}}(s)$ may be unknown, it is a fixed constant for a given static dataset. This provides us with a strong analysis tool for later Sections. From a vector perspective of $\alpha(s)$, this is also suggested in~\citep{puterman2014markov} by choosing $\alpha=(\mathcal{I}-\gamma\mathcal{T})^T$. 

\subsection{Connecting Bellman minimization to LP}
\label{sect:connection_bellman_minimization_to_lp}

We start by rewriting the Lagrangian in Eq.~\ref{eq:rl_lag_equation} into a more compact form with the removal of $\zeta\geq0$ and the inclusion of $f$-divergence using a valid function $g^*(\cdot)$:
\begin{align}
    \label{eq:gen_rl_lp_lag}
    \mathcal{L}(\nu, \zeta) = \sum_{s, a}\alpha(s)\nu(s) + \zeta(s, a)e_\nu(s,a) - g^*(\zeta(s, a))
\end{align}
Previous studies~\cite{lee2021optidice, sikchi2023dual} show that the dual optimization of Eq.~\ref{eq:gen_rl_lp_lag} as $\max_\zeta\min_\nu{L}(\nu, \zeta)$ can be converted into an unconstrained optimization problem either using the convex conjugate or the solution to its Lagrange dual function as follows:
\begin{align}
    \max_\zeta\min_\nu\mathcal{L}(\nu, \zeta) = \min_\nu \alpha(s)\nu(s) + g(e_\nu(s, a)).
\end{align}
By substituting $\alpha(s)\nu(s)$ with $-e_\nu(s, a)$, we can derive the following optimization objective:
\begin{align}
    \label{eq:rl_lp_primal_opt_ineq_conjugate_2}
    \min_\nu -e_\nu(s,a) + g(e_\nu(s,a)).
\end{align}
\begin{theorem}
    \label{theorem:g_m_x_as_loss}
    For a convex $g(\cdot)$ with $g^*(1)=0$ and $g^{*'}(1)=0$, $-x+g(x)$ is convex and $\min_x g(x)-x=0$ when $x=0$
\end{theorem}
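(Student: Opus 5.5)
Convexity is immediate: $g$ is convex by hypothesis and $-x$ is affine, so $h(x):=g(x)-x$ is convex on the domain of $g$. The work is therefore in locating the minimizer and the minimum value. I will do this using the conjugate relation in Eq.~\ref{eq:fenchel_conjugate}, so that the two conditions $g^*(1)=0$ and $g^{*'}(1)=0$ on $g^*$ become statements about $g$ at $0$.

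\emph{Value at $0$.} I evaluate $g$ at $y=0$ through Eq.~\ref{eq:fenchel_conjugate}: $g(0)=\sup_{x}\,\bigl(0\cdot x-g^*(x)\bigr)=-\inf_x g^*(x)$. Since $g^*$ is convex (it is the conjugate of a convex function, and in the $f$-divergence setting it is the convex generator) and $g^{*'}(1)=0$, the point $x=1$ is a global minimizer of $g^*$. Hence $\inf_x g^*(x)=g^*(1)=0$, so $g(0)=0$ and $h(0)=0$.

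\emph{Derivative at $0$.} Eq.~\ref{eq:optimal_conjugate_value} says the maximizer in the conjugate is $x^*=g^{*'-1}(y)$. The envelope theorem (equivalently, the subgradient relation $x^*\in\partial g(y)$) then gives $g'(y)=x^*=g^{*'-1}(y)$. At $y=0$ this yields $g'(0)=g^{*'-1}(0)=1$, because $g^{*'}(1)=0$. Therefore $h'(0)=g'(0)-1=0$.

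\emph{Conclusion.} A convex function with a stationary point attains its global minimum there, so $\min_x\,(g(x)-x)=h(0)=0$, attained at $x=0$.

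\emph{Main obstacle.} The derivative step is the delicate one. It needs $g^{*'}$ to be invertible near $1$, or, if $g^*$ is not strictly convex or smooth, a subgradient version of the argument. I would handle it with the Fenchel--Young inequality: for every $x$, $g(x)\ge x\cdot 1-g^*(1)=x$. This gives $h(x)\ge 0$ everywhere, with equality at $x=0$ by the value computation above. This inequality gives the minimum directly and avoids any differentiability assumption, so I would present it as the primary argument and keep the derivative computation as the intuition.
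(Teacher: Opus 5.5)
Your proof is correct, and the argument you make primary differs from the paper's. The paper's proof is essentially your ``derivative at $0$'' paragraph. Convexity makes $g'$ monotone, and $g^{*'}(1)=0$ together with the inverse-derivative relation of Eq.~\ref{eq:optimal_conjugate_value} gives $g'(0)=1$, so $x=0$ is a stationary point and hence a minimizer of $g(x)-x$. The paper then asserts that the minimum value is $0$ without ever computing $g(0)$.

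Your Fenchel--Young route is both more economical and more complete. The lower bound $g(x)\ge x\cdot 1-g^*(1)=x$ is just Eq.~\ref{eq:fenchel_conjugate} with the supremum tested at the point $1$. It therefore uses only $g^*(1)=0$, and needs no differentiability of $g$ and no invertibility of $g^{*'}$. The hypothesis $g^{*'}(1)=0$ (really $0\in\partial g^*(1)$) is needed only for attainment, through $g(0)=-\inf g^*=-g^*(1)=0$. That is exactly the step the paper leaves unstated.

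The same route gives more than either you or the paper state. Equality $g(x)=x-g^*(1)$ holds exactly when the point $1$ attains that supremum, i.e.\ when $x\in\partial g^*(1)$. Differentiability of $g^*$ at $1$ makes this set $\{0\}$, so $x=0$ is the \emph{unique} minimizer. The paper's later reading of the theorem (minimizing $-e_\nu+g(e_\nu)$ forces the Bellman error to zero) implicitly relies on this uniqueness, and the bare ``$g'$ is monotone'' argument does not deliver it.

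What the paper's derivative route offers instead is consistency with the $g'=(g^{*'})^{-1}$ machinery it reuses later to build $\hat{\zeta^*}=g^{*'-1}_{\alpha_\pm,\beta}(e_\nu)$ for the flexible $f$-divergence. As a proof of this particular statement, your version is the tighter one.
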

\begin{proof}
    Because $g(\cdot)$ is convex, $g'(\cdot)$ is monotonic and therefore $g'(\cdot)-1$ is monotonic. The minimum value of $-x+g(x)$ will be $0$ at $g'(0)=1$, from $g^{*'}(1)=0$, and $x^*=0$.
\end{proof}
Theorem~\ref{theorem:g_m_x_as_loss} suggests that the optimization objective in Eq.~\ref{eq:rl_lp_primal_opt_ineq_conjugate_2} is equivalent to minimizing $\nu$ such that the Bellman error, i.e. $e_\nu$, is 0, with a careful selection of $g(\cdot)$ and its corresponding conjugate function $g^*(\cdot)$. This also suggests that any existing RL algorithm, as long as it uses a compatible loss function $g(\cdot)$ to minimizing the Bellman error, has an implicit $f$-divergence implication between the value function in learning and observed data. For example, the traditional MSE loss for minimizing the Bellman error in Q-learning is equivalent to setting $g(e_\nu)=\frac{1}{2}e_\nu^2+e_\nu$, which is the conjugate of the $\chi^2$-divergence function. 

A key distinction here compared against prior works is the removal of $\zeta \geq 0$. This removal corresponds to changing the inequality constraint in the primal problem constraint (Eq.~\ref{eq:rl_lp_primal_constraint}) into the equality constraint.
Note that the constraint change will still yield a valid optimization problem because the optimal value function must satisfy the Bellman equation, which leads to the Bellman error being zero.
Such a relaxation provides theoretical justification for prior works with relaxed positivity ~\cite{kim2024relaxed} and opens up additional possible function forms for $f$-divergence.

\begin{remark}
    We follow the definition of $f$-divergence with domain over $\mathbb{R}$~\cite{wang2025glivenko, terjek2021moreau, borwein1993partially}, instead of $[0, +\infty)$. This allows negative probability measures and thus $\zeta < 0$. However, 
    $\zeta < 0$ makes the importance sampling interpretation of the dual problem invalid in the traditional probability. Recent work about the likelihood ratio estimation in quasiprobability may provide some insight about its interpretation ~\cite{drnevich2024neural}.
\end{remark}

\subsection{Unified form of LP for RL}
Here, we show a unified form of LP for RL by connecting the dual problem with an $f$-divergence augmentation to its primal problem with a similar augmentation. 
We start by rewriting the dual problem in Eq.~\ref{eq:rl_lp_dual} and~\ref{eq:rl_lp_dual_constraint} using density ratio estimation $\zeta(s, a)$
\footnote{The following conversion is straightforward in the theoretical setting, i.e. $d^{\mathcal{D}}(s,a)=1$. In the practical sampling setting, $(\mathcal{T}_{*} \zeta)(s)
=\sum_{\bar{s},\bar{a}}\mathcal{T}(s|\bar{s}, \bar{a})d^\mathcal{D}(\bar{s}, \bar{a})\zeta(\bar{s}, \bar{a})
=\mathbb{E}_{(\bar{s},\bar{a})\sim\mathcal{D}}[\mathcal{T}(s|\bar{s}, \bar{a})\zeta(\bar{s}, \bar{a})]$.
$(\mathcal{B}_{*}\zeta)(s)
=\sum_a d^{\mathcal{D}}(s,a)\zeta(s,a)
=\mathbb{E}_{a\sim\pi_\beta(\cdot|s)}[d^\mathcal{D}(s)\zeta(s,a)]$, which would transform to full sampling when aggregate over all state $s$, i.e. $\sum_s(\mathcal{B}_{*}\zeta)(s) = \mathbb{E}_{(s,a)\sim\mathcal{D}}[\zeta(s,a)]$.
}
and the standard $L_P$ term with the inclusion of $f$-divergence:
\begin{align}
    \label{eq:rl_lp_dual_reform}
    &\max_\zeta \zeta(s,a)r(s, a) -g^*(\zeta(s,a))\\ %\textcolor{red}{-D_{f}(\zeta(s, a))}
    \label{eq:rl_lp_dual_reform_constraint}
    &\text{s.t. } (\mathcal{B}_{*}\zeta)(s) - \alpha(s) - \gamma (\mathcal{T}_{*} \zeta)(s)=0, \forall s,a
\end{align}
The constraint is still the Bellman flow constraint, but rewritten in the format of a residual equal to zero. We define $e_\zeta(s,a)=(\mathcal{B}_{*}\zeta)(s) - \alpha(s) - \gamma (\mathcal{T}_{*} \zeta)(s)$ as the Bellman occupancy residual. Following Section~\ref{sect:connection_bellman_minimization_to_lp}, its Lagrangian is Eq.~\ref{eq:gen_rl_lp_lag}, and it can be transformed into the following unconstrained optimization problem with Eq.~\ref{eq:rl_lp_primal_opt_ineq_conjugate_2} being its special case:
\begin{align}
    \label{eq:unified_lp_dual_unconstrained}
    \min_\nu \alpha(s)\nu(s) + g(e_\nu(s,a)).
\end{align}
We can also rewrite it into a singular unconstrained optimization problem\footnote{\label{ft:zero_constraint_function} We define 
$\delta_{0}(x)=\begin{cases}
    0\quad\text{if }x=0,\\ \infty\quad\text{otherwise}
\end{cases}$.}:
\begin{align}
    \label{eq:unified_lp_dual_singular}
    \max_\zeta \sum_{s,a}\zeta(s,a)r(s, a) -g^*(\zeta(s,a)) - \delta_0(e_\zeta(s,a))
\end{align}
Now consider a modified primal problem with a similar augmentation of a convex function $\varphi(\nu)$ and conversion of the inequality constraint to the equality constraint:
\begin{align}
    \label{eq:rl_lp_primal_reform}
    &\min_{\nu}\alpha(s)\nu(s) + \varphi(\nu(s))\\
    \label{eq:rl_lp_primal_reform_constraint}
    &\text{s.t. } e_\nu(s,a)=0, \forall s, a 
\end{align}
The min-max optimization over its Lagrangian can be written as
\begin{align}
    \max_\zeta\min_\nu &\mathcal{L}(\nu,\zeta)=\max_\zeta\min_\nu\sum_{s,a}\alpha(s)\nu(s)+\zeta(s,a)r(s,a)
    \notag \\
    & + \zeta(s,a)\cdot\gamma(\mathcal{T}\nu)(s,a)-\zeta(s,a)\cdot(\mathcal{B}\nu)(s,a) 
    \notag\\
    &+ \varphi(\nu(s)) \\
    \label{eq:unified_lp_primal_negate_step}
    =& \min_\zeta\max_\nu\sum_{s,a}-\zeta(s,a)r(s,a) -\alpha(s)\nu(s)
    \notag\\
    &- \nu(s)\cdot\gamma(\mathcal{T}_{*}\zeta)(s)+\nu(s)\cdot(\mathcal{B}_{*}\zeta)(s)
    \notag\\
    &- \varphi(\nu(s))\\
    =&\min_\zeta\sum_{s,a}-\zeta(s,a)r(s,a)
    \notag \\
    &+\max_\nu [\nu(s)\cdot
    \underbrace{\bigl((\mathcal{B}_{*}\zeta)(s)-\alpha(s)-\gamma(\mathcal{T}_{*}\zeta)(s)\bigr)}_{e_\zeta(s,a)}
    - \varphi(\nu(s))] \\
    =&\min_\zeta\sum_{s,a}-\zeta(s,a)r(s,a) +\max_\nu [\nu(s)\cdot e_\zeta(s,a) - \varphi(\nu(s))] \\
    \label{eq:unified_lp_primal_conjugate_step}
    =& \min_\zeta\sum_{s,a}-\zeta(s,a)r(s,a)+\varphi^*(e_\zeta(s,a)) \\
    =& \max_\zeta\sum_{s,a}\zeta(s,a)r(s,a)-\varphi^*(e_\zeta(s,a))
    \label{eq:unified_lp_primal_unconstrained}
\end{align}
Eq.~\ref{eq:unified_lp_primal_negate_step} changes the min-max optimization direction by negating the Lagrangian. Eq.~\ref{eq:unified_lp_primal_conjugate_step} is the result of applying convex conjugate. 

Eq.~\ref{eq:unified_lp_primal_unconstrained} shares a close similarity with Eq.~\ref{eq:unified_lp_dual_unconstrained}. We can also rewrite the constrained optimization problem in Eq.~\ref{eq:rl_lp_primal_reform} and~\ref{eq:rl_lp_primal_reform_constraint} into a singular unconstrained optimization problem, similar to Eq.~\ref{eq:unified_lp_dual_singular}:
\begin{align}
    \label{eq:unified_lp_primal_singular}
    \min_{\nu}\sum_{s,a}\alpha(s)\nu(s) + \varphi(\nu(s))+\delta_0(e_\nu(s,a))
\end{align}
It is straightforward to spot the similarity between
Eq.~\ref{eq:unified_lp_dual_unconstrained}
and~\ref{eq:unified_lp_primal_singular}, as well as the similarity between
Eq.~\ref{eq:unified_lp_dual_singular}
and~\ref{eq:unified_lp_primal_unconstrained}.
In fact, with either
$g(\cdot)=\delta_0(\cdot)$ or $\varphi^*(\cdot)=\delta_0(\cdot)$, {\em i.e.}, applying
the equality constraint to either $e_\nu$ or $e_\zeta$, we can derive the
following equivalence:
\begin{align}
    &\min_{\nu}\sum_{s,a}\alpha(s)\nu(s) + \varphi(\nu(s))+g(e_\nu(s,a)) 
    \notag \\
    \equiv &\max_\zeta \sum_{s,a}\zeta(s,a)r(s, a) - \varphi^*(e_\zeta(s,a))-g^*(\zeta(s,a)) 
\end{align}
From the convex conjugate perspective, the equivalence holds because the conjugate function of $\delta_0(\cdot)$ is $\delta_0^*(\cdot)=0$. From the constrained optimization using Lagrangian, the equivalence holds because any valid solutions need to have $e_\nu(s,a)=0$ or $e_\zeta(s,a)=0$.
By applying the equality constraint on $e_\nu(s, a)$, the primal problem is equivalent to an unconstrained version of the dual problem, with its equality constraint on $e_\zeta(s,a)$ changed into a penalty function characterized by $\varphi^*(\cdot)$. {\em Vice versa} for the dual problem. 
As a result, the selection of the penalty function, either $g(\cdot)$ or $\varphi^*(\cdot)$, and its conjugate function directly impact the optimization process.

\section{Flexible $f$-divergence}

Theoretical results from the previous section show that the LP optimization of RL, either implicitly or explicitly, contains the $f$-divergence regularization by choosing the penalty function $g(\cdot)$ and its conjugate $g^*(\cdot)$. As a result, the selection of $g^*(\cdot)$ is crucial. 

Previous theoretical analyses of offline RL algorithms' performance guarantees are based on assumptions about the ``concentrability'' of a dataset's coverage~\citep{agarwal2019reinforcement, rashidinejad2021bridging, zhan2022offline}. Formally, a concentrability coefficient $C^\pi$ is defined as the smallest constant such that $\frac{d^\pi(s, a)}{d^\mathcal{D}(s, a)} \leq C^\pi, \forall \pi \in \mathcal{F}^\pi, s\in S, a\in A$. The concrete family of the policy $\mathcal{F}^\pi$ determines the concrete type of the concentrability coefficient. For example, setting it to be the family of $\pi^*$ yields the $\pi^*$-concentrability. The concentrability coefficient characterizes the lower bound of $d^\mathcal{D}$ against the policy family. Under such an assumption, the traditional application of $f$-divergence in the optimization can be over-conservative since it places the same constraint over $d^\mathcal{D}(s, a)$ from a policy outside of the policy family of our interest, usually those producing higher returns. This is more likely with more fundamentally different behavior polices. 

Here, we introduce the function form of the Flexible $f$-divergence that is generalized and flexible to apply different levels of penalties or constraints by selecting corresponding parameters and base functions. We express the function form in the context of the divergence function $g^*(\cdot)$, but we also provide concrete steps to compute the corresponding $g(\cdot)$ such that the Flexible $f$-divergence is applicable to the general form of RL algorithms with the Bellman error minimization.

\subsection{Functional Form for Flexible $f$-divergence}

We propose a general function format $g^*(\zeta; \alpha_-, \alpha_+, \beta)$, shorten for $g^*_{\alpha_\pm, \beta}(\zeta)$, as the function for the Flexible $f$-divergence by joining two scaled closed convex functions $g^*_-$ and $g^*_+$ at the threshold value $\beta$ to constrain $\zeta$ with different levels and functions. We denote its conjugate function as $g_{\alpha_\pm, \beta}(\cdot)$.
We use $\alpha_{-}$ and $\alpha_{+}$ to scale $g^*_-$ and $g^*_+$ respectively. For general optimization, we would want the function to be continuous at $\beta$, {\it i.e.}, $\alpha_{-}g^*_{-}(\beta) = \alpha_{+}g^*_{+}(\beta)$. To ensure $g^*(\zeta; \alpha_-, \alpha_+, \beta)$'s conjugate is also a meaningful optimization objective, both $g^{*'}_{-}$ and $g^{*'}_{+}$ must be invertible. Lastly, its Fenchel conjugate needs to be continuous at $\beta$, which requires $\alpha_{-}g^{*'}_{-}(\beta) = \alpha_{+}g^{*'}_{+}(\beta)$. 

We start by picking two closed, strictly convex and differentiable functions $\bar{g}^*_{-}$ and $\bar{g}^*_{+}$ from the family of valid functions for $f$-divergence, {\it i.e.}, $\mathcal{F}=\{f|f:\mathbb{R}^+\rightarrow \mathbb{R}, f(1)=0, f^{'}(1)=0\}$. The condition follows Theorem~\ref{theorem:g_m_x_as_loss} to ensure the corresponding $-x+g_{\alpha_\pm, \beta}(x)$ is a full fledged loss function. The choice of functions guarantees their derivative functions to be monotonic and thus invertible. Based on these two base functions, we apply scaling and a linear difference: 
\begin{equation}
    \label{eq:adapt_f_div_func}
    g^*_{\alpha_\pm, \beta}(\zeta) = \begin{cases}
         \alpha_{-}\bar{g}^*_{-}(\zeta) - \mathds{1}_{\beta < 1}(k_g\zeta+C_g) & \zeta<\beta\\
         \alpha_{+}\bar{g}^*_{+}(\zeta) + \mathds{1}_{\beta \geq 1}(k_g\zeta+C_g) & \zeta\geq\beta.
    \end{cases}
\end{equation}

Depending on whether $\beta < 1$ or not, we apply a linear difference to either $\bar{g}^*_{-}$ or $\bar{g}^*_{+}$. The function without added linear difference ($g^*_{\pm}(\zeta)=\bar{g}^*_{\pm}(\zeta)$) will retain its function form scaled by its coefficient and still satisfy the requirement of being a function for $f$-divergence, especially $g^*_{\pm}(1)=0$.\footnote{We use $\bar{g}^*_{\pm}$ to indicate either $\bar{g}^*_-$ or $\bar{g}^*_+$.} The function with added linear difference ($g^*_{\pm}(\zeta)=\bar{g}^*_{\pm}(\zeta)\pm(k_g\zeta+C_g)$) is not guaranteed to behave the same as the original function or even be a valid function by itself for $f$-divergence. Evaluating $\zeta$ over the support of $d^\mathcal{D}$, we have
\begin{align}
    &\mathbb{E}_{d^\mathcal{D}}[\bar{g}^*_{\pm}(\frac{d(s, a)}{d^\mathcal{D}(s, a)})\pm (k_g \frac{d(s, a)}{d^\mathcal{D}(s, a)}+C_g)] \\
    = &\mathbb{E}_{d^\mathcal{D}}[\bar{g}^*_{\pm}(\frac{d(s, a)}{d^\mathcal{D}(s, a)})]\pm \mathbb{E}_{d^\mathcal{D}}[k_g(\frac{d(s, a)}{d^\mathcal{D}(s, a)}-1)] \notag\\
    &\pm\mathbb{E}_{d^\mathcal{D}}[C_g+k] \\
    \label{eq:f_d_bia_derivation_step3}
    = &D_{\bar{g}^*_\pm}(\zeta)\pm D_{f(x)=k_g(x-1)}(\zeta)\pm\mathbb{E}_{d^\mathcal{D}}[C_g+k]\\
    \label{eq:f_d_bia_derivation_step4}
    = &D_{\bar{g}^*_\pm}(\zeta)\pm (C_g+k).
\end{align}
We can derive Eq.~\ref{eq:f_d_bia_derivation_step4} from Eq.~\ref{eq:f_d_bia_derivation_step3} by using $f$-divergence's property of $D_f(\frac{d}{d^\mathcal{D}})=0$ iff $f(x)=c(x-1)$. From Eq.~\ref{eq:f_d_bia_derivation_step4}, we argue that the function component with added difference can work equivalently as its original $f$-divergence plus a constant bias and does not impact its usage as a learning objective.

We pick the linear coefficient $k_g=\alpha_{-}\bar{g}^{*'}_{-}(\beta) - \alpha_{+}\bar{g}^{*'}_{+}(\beta)$ to satisfy the condition of $\alpha_{-}g^{*'}_{-}(\beta) = \alpha_{+}g^{*'}_{+}(\beta)$, as illustrated by evaluating the derivative Function~\ref{eq:adapt_f_div_derivative} with $\zeta=\beta$. This introduces additional difference between $g^*_-(\beta)$ and $g^*_+(\beta)$ terms in Eq.~\ref{eq:adapt_f_div_func}. So we choose $C_g=\alpha_{-}\bar{g}^{*}_{-}(\beta) - \alpha_{+}\bar{g}^{*}_{+}(\beta) - \beta k_g$ to remove the value difference. 

For the complete dual optimization algorithms, for example, using dual gradient descent~\citep{boyd2004convex}, $g^*(\zeta)$ in Eq.~\ref{eq:adapt_f_div_func} would be enough. However, for algorithms such as OptiDICE that use intermediate $e_\nu$ to estimate its current optimal $\hat{\zeta^*}$ or algorithms directly optimizing Eq.~\ref{eq:rl_lp_primal_opt_ineq_conjugate_2}, we derive the estimation of $\hat{\zeta^*}=g^{*'-1}_{\alpha_\pm, \beta}(e_\nu)$ by following Eq.~\ref{eq:optimal_conjugate_value} to get
\begin{align}
    \label{eq:adapt_f_div_derivative}
    &g^{*'}_{\alpha_\pm, \beta}(\zeta) = \begin{cases}
        \alpha_{-}\bar{g}^{*'}_{-}(\zeta) - \mathds{1}_{\beta < 1}(k_g) & \zeta<\beta\\
        \alpha_{+}\bar{g}^{*'}_{+}(\zeta) + \mathds{1}_{\beta \geq 1}(k_g) & \zeta\geq\beta
    \end{cases} \\
    \label{eq:adapt_f_div_prime_inverse}
    &g^{*'-1}_{\alpha_\pm, \beta}(e_\nu) = \begin{cases}
        \bar{g}^{*'-1}_{-}(\frac{e_\nu + \mathds{1}_{\beta < 1}(k_g)}{\alpha_{-}}) & e_\nu < \beta_{e_\nu} \\
        \bar{g}^{*'-1}_{+}(\frac{e_\nu - \mathds{1}_{\beta \geq 1}(k_g)}{\alpha_{+}}) & e_\nu \geq \beta_{e_\nu}
    \end{cases}.
\end{align}

Since we use $\beta$ as the threshold to separate the evaluation of $\zeta$, correspondingly, there will be a threshold for $e_\nu$ as the result of $g(e_\nu)$ and $g^*(\zeta)$ being the conjugate of each other. We compute the threshold value for $e_\nu$ as $\beta_{e_\nu}$ by evaluating $g^{*'}(\zeta=\beta)$ as $\beta_{e_\nu} = \mathds{1}_{\beta \geq 1}(\alpha_{-}\bar{g}^{*'}_{-}(\beta)) + \mathds{1}_{\beta < 1}(\alpha_{+-}\bar{g}^{*'}_{+}(\beta))$. 
For algorithms directly optimizing Eq.~\ref{eq:rl_lp_primal_opt_ineq_conjugate_2}, We can then acquire the optimization objective of $e_\nu$ by substituting the variable $\zeta$ in the convex conjugate with $\hat{\zeta^*}$ as $L(\nu)=-e_\nu+\hat{\zeta^*}e_\nu-g^*_{\alpha_\pm, \beta}(\hat{\zeta^*})$, with the possible addition of other terms depending on the concrete algorithm. 

\subsection{Heuristic estimation of $\alpha_\pm$ and $\beta$}

The native form of $g^*_{\alpha_\pm, \beta}(\zeta)$ relies on the selection of $\alpha_\pm$, $\beta$, and $g^*_\pm(\cdot)$. While the fully automatic selection algorithm for these components is not the main contribution of our paper, we provide heuristic methods to estimate values for $\alpha_\pm$ and $\beta$ using the learning agent's predicted values from the offline dataset. 

\textbf{Estimating $\alpha_\pm$.} Inspired by~\citet{han2025cosine}, we train an additional behavior cloning policy $\pi_b$ from the offline dataset and compare its predicted action probability against learned $\exp(e_\nu)$. Specifically, we predict the action probability $\pi_b(a|s)$ and $\exp(e_\nu(s,a))$ for each state-action pair in the sampled batch. We treat the list of computed probability and the list of $\exp(e_\nu)$ as two vectors, i.e. $\overrightarrow{\pi_b}$ and $\overrightarrow{\exp(e_\nu)}$, and compute $\alpha_\pm$ as follows:
\begin{align}
    \alpha_+ &= \frac{1}{\cos\langle \overrightarrow{\pi_b}, \overrightarrow{\exp(e_\nu)}\rangle} \\
    \alpha_- &= \frac{1}{1 - \cos\langle \overrightarrow{\pi_b}, \overrightarrow{\exp(e_\nu)}\rangle},
\end{align}
where $\cos\langle\cdot, \cdot\rangle$ computes the cosine similarity. We use an exponential moving average (EMA) to smooth the estimation throughout training.

\textbf{Estimating $\beta$.} Because $g(e_\nu)$ and $g^*(\zeta)$ are conjugate functions to each other, $e_\nu$ and $\zeta$ has a direct correspondence. We compute the average $e_\nu$ as $\bar{e_\nu}$ from sampled batches and smooth it with EMA. We then compute the beta value by $\beta = g^{*'-1}_{\alpha_\pm, 0}(\bar{e_\nu})$.

\subsection{Effect of base function, $\alpha_\pm$ and $\beta$}

\label{subsect:adaptive_f_divergence_in_practical_algorithms}
\begin{table*}[ht]
    \centering
    \begin{tabular}{c||c|c|c|c}
         $f$-divergence & $g^*(\zeta)$ & $g(e_\nu)$ & $g^{*'}(\zeta)$ &  $g^{*'-1}(e_\nu)$ \\
         \hline
         \hline
         $\chi^2$ & $\frac{1}{2}(\zeta-1)^2$ & $\frac{1}{2}e_\nu^2+e_\nu$ & $\zeta-1$ & $e_\nu+1$ \\
         KL-divergence & $\zeta\ln{\zeta}-\zeta+1$ & $\exp(e_\nu)-1$ & $\ln{\zeta}$ & $\exp(e_\nu)$ \\
         Reverse-KL & $-\ln{\zeta}+\zeta-1$ & $-\ln(1-e_\nu)$ & $-\frac{1}{\zeta}+1$ & $\frac{1}{1-e_\nu}$ \\
         Hellinger & $\frac{1}{2}(\sqrt{\zeta}-1)^2$ & $\frac{e_\nu}{1-2e_\nu}$ & $\frac{\sqrt{\zeta}-1}{\sqrt{\zeta}}*\frac{1}{2}$ & $\frac{1}{1-2e_\nu}^2$ \\
         Le-Cam & $\frac{1-\zeta}{2(\zeta+1)}+\frac{\zeta-1}{4}$ & $-\sqrt{1-4e_\nu}-e_\nu+1$ & $-\frac{1}{(\zeta+1)^2}+\frac{1}{4}$ & $\sqrt{\frac{4}{1-4e_\nu}}-1$ \\
    \end{tabular}
    \caption{Example divergence functions $g^*$, conjugate $g$, derivative, and inverse of derivative, adjusted to ensure $g^{*'}(1)=0$.}
    \label{tab:f_divergence_table}
\end{table*}

\begin{figure*}[ht]
    \begin{subfigure}[b]{\textwidth}
    \centering
    \includegraphics[width=\linewidth]{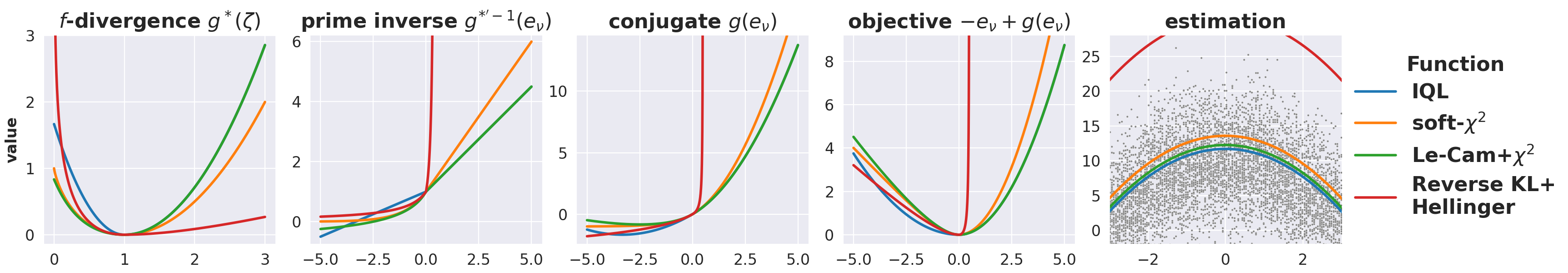}
    \end{subfigure}
    \centering
    \begin{subfigure}[b]{\textwidth}
    \centering
    \includegraphics[width=\linewidth]{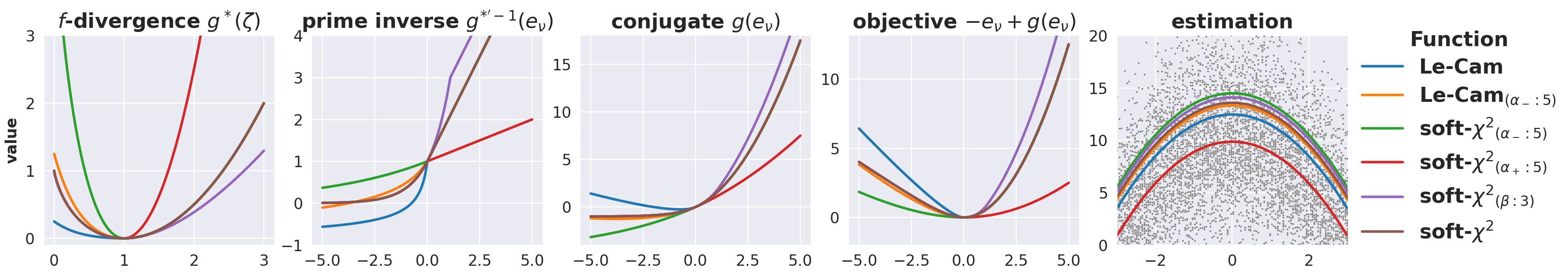}
    \end{subfigure}
    \centering
    \caption{(Top) Example $f$ divergence function. The function for IQL is $\chi^2$ with $\alpha_-=\frac{10}{3}$ and $\alpha_+=\frac{10}{7}$, corresponding to the $70\%$-expectile regression. Le-Cam+$\chi^2$ shares the same coefficient. (Bottom) Illustration of $\alpha_-$, $\alpha_+$, and $\beta$'s effect. Default values are $1.0$. All functions use $\chi^2$ for as $g^*_+(\cdot)$.}
    \label{fig:f_func_diag}
\end{figure*}

Table~\ref{tab:f_divergence_table} provides some examples of existing $f$-divergence approaches, with the adjustment to ensure $g^{*'}(1)=0$, and their corresponding functions. Figure~\ref{fig:f_func_diag} shows graphs of selected flexible $f$-divergence, in terms of the divergence function $g^*(\cdot)$, the prime inverse function $g^{*'-1}(\cdot)$, and conjugate function $g(\cdot)$, as well as the $-e_\nu+g(e_\nu)$ as the loss function in the traditional RL. We also illustrate their estimation effect using a simple two-dimensional dataset with Gaussian noise injected into the y-dimension.
The top row of Figure~\ref{fig:f_func_diag} compares the effect of different base functions. While all of $g^*(\cdot)$ are convex with the minimum value at $g^*(1)=0$ and all of $-e_\nu+g(e_\nu)$ are convex with the minimum value at $-0+g(0)=0$, their derivatives are vastly different and thus have different impact in the optimization effect, as illustrated in the estimation effect. For example, using the Hellinger function as $g^*_+$ will approximate the maximum value. The bottom row of Figure~\ref{fig:f_func_diag} shows the effect of $\alpha_\pm$ and $\beta$. A high-level interpretation of the impact from the perspective of the resulting $-e_\nu+g(e_\nu)$ is that the larger the magnitude of the derivative is at the positive or the negative side, the more penalty the positive or negative Bellman error $e_\nu$ will receive. However, such a difference cannot be captured by the coefficient $\alpha_\pm$ alone since the derivative is often non-linear.

\section{Related works}

\textbf{LP and DRE for offline RL} LP algorithm for RL relies on the concept of density ratio estimation (DRE), which is $\zeta$, to provide estimation or correction for the density ratio between $d$ and $d^D$. DualDICE~\cite{nachum2019dualdice} applied the Fenchel Duality to derive the LP format of RL for the MDP with $0$ reward values for policy evaluation. AlgaeDICE~\cite{nachum2019algaedice} and GenDICE~\cite{zhang2020gendice} derived the LP between $Q$ and $\pi$ for policy improvement. OptiDICE~\cite{lee2021optidice} aimed to resolve the instability from the dual optimization by solving the closed-form solution for $\zeta$. ODICE~\cite{mao2024odice} inspected the gradient of LP. $f$-DVL~\cite{sikchi2023dual} extended the framework of LP and proved that several offline RL algorithms, such as CQL~\cite{kumar2020conservative}, SQL~\cite{xu2023offline}, and XQL~\cite{garg2023extreme} are special cases of LP for RL. Here, we further extend the LP framework by identifying $-e_\nu$ as $L_P$ and relaxing the constraint of $\zeta\geq 0$.

\textbf{$f$-divergence and relaxed regularization} Some functions for the $f$-divergence in prior works are special cases of Ada-$f$. The $f_{\text{soft}-\chi^2}$ function in OptiDICE~\citep{lee2021optidice} uses the $\alpha$-divergence function as $\bar{g}^*_{-}$ and the $\chi^2$-divergence function as $\bar{g}^*_{+}$, with $\alpha_\pm =\beta=1$.
Similarly, RelaxDICE~\citep{yu2023offline} adds a relaxed $f$ divergence upon the base KL-divergence, resulting in the special case with both base functions being $KL$-divergence and $\alpha_{-}=1, \alpha_{+}=2$. 
Indirectly, the expectile regression for updating $V(s)$ in IQL~\citep{kostrikov2021offline} can be considered as optimizing Eq.~\ref{eq:rl_lp_primal_opt_ineq_conjugate_2} directly with $g$ and $g^*$ being $\chi^2$. For a selected expectile value $\tau$, we can use $\alpha_-=\frac{1}{1-\tau}$ and $\alpha_+=\frac{1}{\tau}$ to achieve the same effect. PORelDICE~\cite{kim2024relaxed} relaxes the regularization by setting $\zeta\geq \varepsilon$, which shares a similar effect by using Le-Cam divergence for $\bar{g}^*_{-}$ and set $\alpha_-=\varepsilon$.

\section{Experiments and Analysis}

Our experiments serve two main objectives. The first one is to verify whether the proposed general constrained LP format for RL is a valid RL algorithm. Our second objective is to examine the effect of flexible $f$-divergence in challenging learning settings to see whether it can improve the performance over its base algorithm. Note that the main focus of our experiments is to inspect flexible $f$-divergence's effect by substituting the base algorithm's default $f$-divergence function, either in the explicit form of $g^*(\zeta(s,a))$ or the implicit form of $g(e_\nu(s,a))$. It is not to compare against algorithms with a completely different optimization procedure, though the flexible $f$-divergence could substitute those algorithms' $f$-divergence function as well.

We devised two algorithms following the theoretical framework as \emph{Flex-$f$-Q} and \emph{Flex-$f$-DICE}. \emph{Flex-$f$-Q} approximates $e_\nu$ as $Q_\phi-\nu_\theta$, optimizes Eq.~\ref{eq:rl_lp_primal_opt_ineq_conjugate_2} directly to update $\nu_\theta$ through semi-gradient descent, and use $-e_\nu$ as $L_P$. It is conceptually similar to IQL~\cite{kostrikov2021offline} but with Flex-$f$ as its divergence. \emph{Flex-$f$-DICE} is the OptiDICE algorithm with its divergence substituted with Flex-$f$. Close performance between \emph{Flex-$f$-Q} and IQL can empirically show that the proposed general constrained LP format is a valid RL learning algorithm, with the swapped $L_P$ and removal of $\zeta\geq 0$. Comparison between both pairs of algorithms can show the impact of Flex-$f$.

We focus on environments with continuous action space settings, including continuous control and goal-oriented tasks. For continuous control tasks, we conducted experiments on MuJoCo~\citep{todorov2012mujoco} environments: 
Hopper-v4, Walker2d-v4, Ant-v4, and HalfCheetah-v4. The main objective of these environments is to control various types of simulated robots to move forward as far as possible without falling into an unhealthy state. For goal-oriented tasks, we adopted Fetch environments~\citep{plappert2018multi}, including Push-v2 and PickAndPlace-v2. These environments require the RL agent to control a robot arm and move the target object to the specified goal location. We also included Pen-v1 and Hammer-v1 from ArdoitHand with \emph{cloned} and \emph{human} datasets from D4RL~\citep{fu2020d4rl}.

We collected new datasets for the included MuJoCo and Fetch environments. Using the Soft Actor-Critic~(SAC) algorithm~\citep{haarnoja2018soft}, we trained the behavior policy following D4RL's data collection procedure except for the policy's variance. For goal-oriented environments in Fetch, we used Hindsight Experience Replay (HER)~\citep{andrychowicz2017hindsight} with the future resampling strategy and resampling size $k=4$. 
We limited the explorative behavior of the policy during data collection by fixing the variance of the policy distribution to be $0.0$ as fully deterministic policies. We constructed datasets by mixing data collected from 2, 4, and 10 behavior policies at different expertise levels, referred to as \emph{2-p}, \emph{4-p}, and \emph{10-p}, respectively. Please see the supplementary material for details about the data collection procedure.

\emph{Flex-$f$-DICE} uses $\chi^2$ as $g^*_{+}$ and KL as $g^*_{-}$ in MuJoCo environments. \emph{Flex-$f$-Q} uses KL as $g^*_{+}$ and $\chi^2$ as $g^*_{-}$ in MuJoCo environments. 
$g^*_{+}$ is $\chi^2$ and $g^*_{-}$ is Le-Cam for all other environments.
Other hyperparameters between the two pairs of compared algorithms, IQL vs. \emph{Flex-$f$-Q} and OptiDICE vs. \emph{Flex-$f$-DICE}, are the same, respectively. We trained each algorithm in each setting for five different seeds and report the average normalized returns.

\begin{table}[t!]
\centering
\caption{Min-Max Normalized Returns}
\label{table:exp_adaf_normalized_return}
\begin{tabular}{l|l|ll|ll}
\toprule
 &   & IQL & Flex-$f$- & Opti- & Flex-$f$- \\
env & dataset & & Q & DICE & DICE\\
\midrule
\multirow[t]{3}{*}{hopper} & 4-p & 62.8 & \textbf{76.4} & 77.9 & \textbf{99.0} \\
 & 2-p & 100.7 & \textbf{106.0} & \textbf{45.2} & 40.8 \\
 & 10-p & 63.5 & \textbf{68.5} & 89.9 & \textbf{94.5} \\
\cline{1-6}
\multirow[t]{3}{*}{walker} & 4-p & \underline{50.3} & \underline{50.7} & 75.7 & \textbf{94.2} \\
 & 2-p & \underline{85.6} & \underline{88.3} & \underline{130.6} & \underline{130.3} \\
 & 10-p & \underline{101.2} & \underline{102.0} & 79.8 & \textbf{86.3} \\
\cline{1-6}
\multirow[t]{3}{*}{ant} & 4-p & 113.8 & \textbf{123.1} & 126.0 & \textbf{136.0} \\
 & 2-p & 120.3 & \textbf{146.1} & 126.6 & \textbf{129.3} \\
 & 10-p & \underline{92.7} & \underline{91.1} & 74.9 & \textbf{81.5} \\
\cline{1-6}
\multirow[t]{3}{*}{half-} & 4-p & \underline{52.3} & \underline{54.7} & 15.9 & \textbf{47.0} \\
{cheetah} & 2-p & \underline{44.9} & \underline{42.6} & 13.4 & \textbf{40.1} \\
 & 10-p & \underline{78.2} & \underline{80.3} & 80.5 & 79.4 \\
\cline{1-6}
\multirow[t]{3}{*}{push} & 4-p & \underline{91.6} & \underline{92.4} & 81.9 & \textbf{88.3} \\
 & 2-p & \underline{95.2} & \underline{95.2} & 87.4 & \textbf{93.0} \\
 & 10-p & \underline{84.1} & \underline{81.3} & \underline{69.5} & \underline{68.4} \\
\cline{1-6}
\multirow[t]{3}{*}{pick} & 4-p & \underline{95.7} & \underline{97.0} & 87.2 & \textbf{90.3} \\
{\&place} & 2-p & \underline{92.0} & \underline{92.7} & \underline{91.6} & \textbf{93.2} \\
 & 10-p & \underline{45.2} & \underline{45.3} & 26.6 & \textbf{37.9} \\
\cline{1-6}
\multirow[t]{2}{*}{pen} & cloned & \underline{56.2} & \underline{58.9} & 22.9 & \textbf{40.4} \\
 & human & \underline{52.9} & \underline{53.4} & 10.2 & \textbf{39.6} \\
\cline{1-6}
\multirow[t]{2}{*}{hammer} & cloned & \textbf{2.2} & \underline{0.9} & \underline{0.2} & \textbf{1.3} \\
 & human & \underline{1.0} & \underline{1.9} & \textbf{3.4} & 0.9 \\
\cline{1-6}
\bottomrule
\end{tabular}
\end{table}

Table~\ref{table:exp_adaf_normalized_return} shows each algorithm's average return for each environment and dataset. We highlight the entry if it is higher than the compared counterpart, and underline the entry if the compared algorithms achieved similar performance ($\Delta\leq5$). \emph{Flex-$f$-Q}'s performance is overall close to IQL, empirically showing the correctness of the proposed general LP form as the Bellman minimization algorithm. In some cases, the performance of \emph{Flex-$f$-Q} is higher than IQL. \emph{Flex-$f$-DICE} can almost always achieve improved performance over OptiDICE. This suggests the potential of Flex-$f$ in increasing its base algorithm's performance. \emph{Flex-$f$-DICE}'s higher performance in Mujoco environments, where it shares the same soft-$\mathcal{X}^2$ base function as OptiDICE, verifies the effect of heuristically estimated $\alpha_\pm$ and $\beta$. We also include the result from TD3BC~\citep{fujimoto2021minimalist} and CQL~\citep{kumar2020conservative} in the supplementary material for comparison.

\begin{figure}[th!]
    \centering
    \includegraphics[width=.3\textwidth]{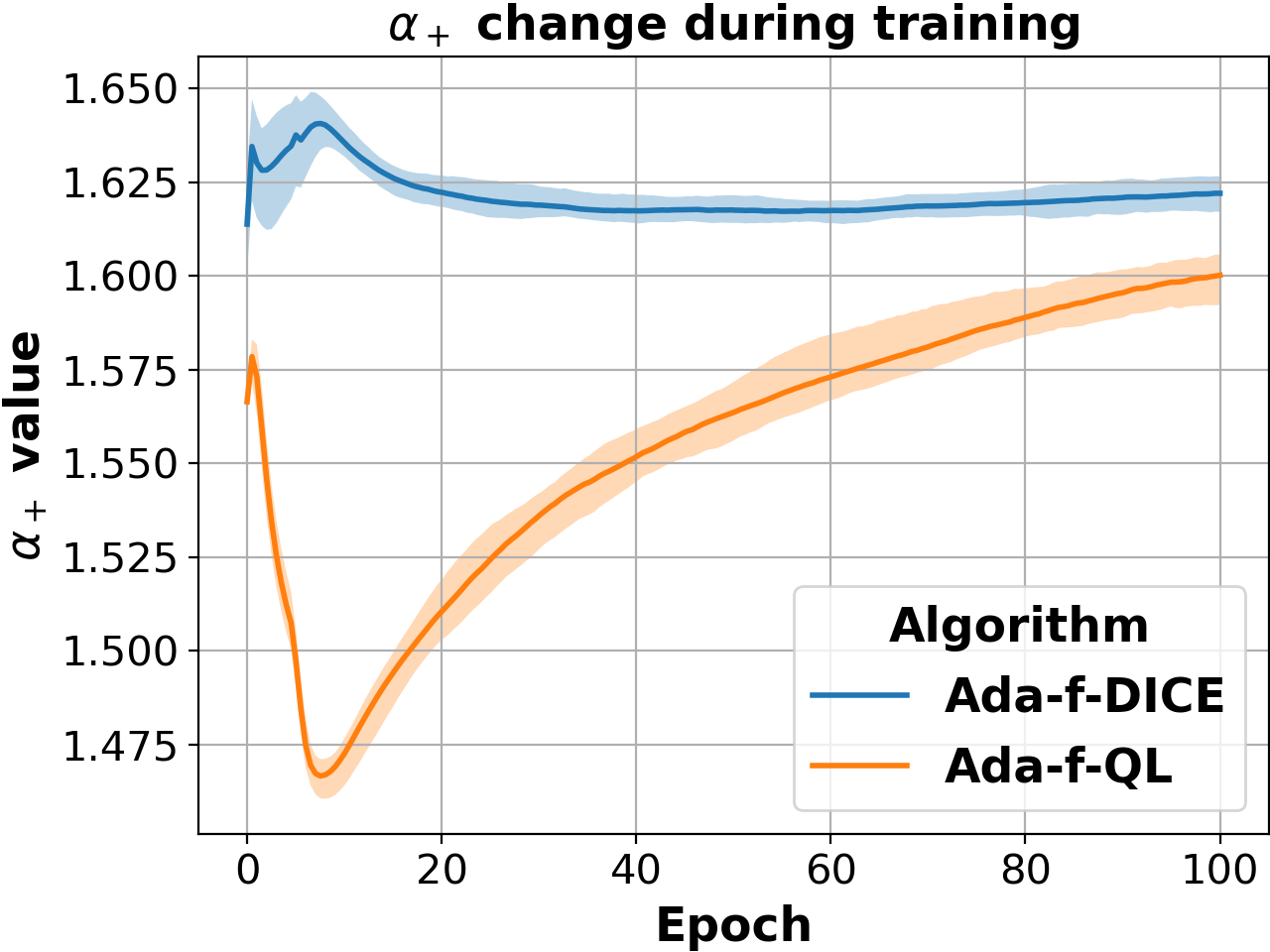}\\
    \includegraphics[width=.3\textwidth]{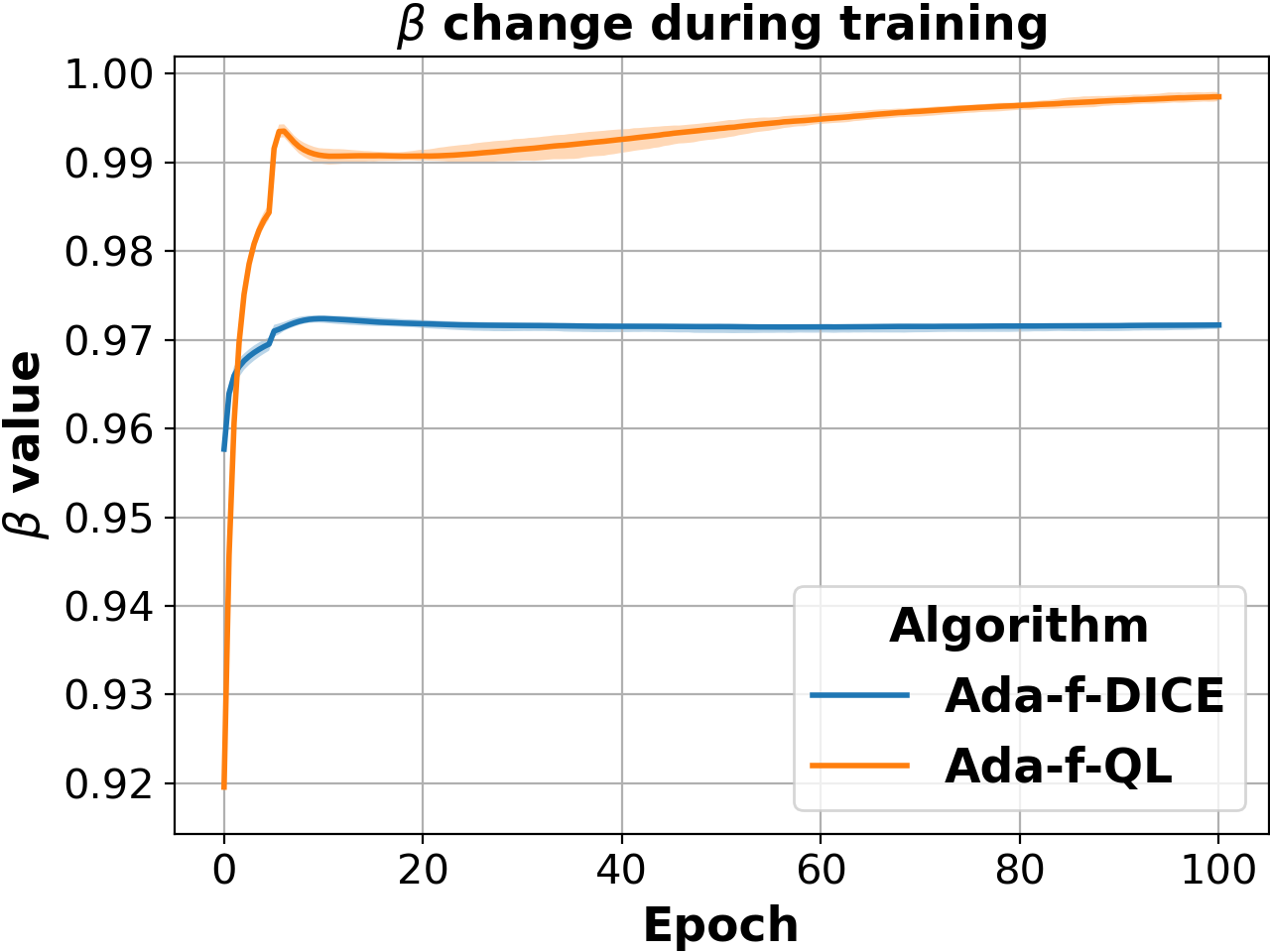}
    \caption{$\alpha_\pm$ and $\beta$ changes throughout training.}
    \label{fig:data_stats_and_algo_perf}
\end{figure}

Figure~\ref{fig:data_stats_and_algo_perf} shows an example of how $\alpha_\pm$ and $\beta$ change throughout the training process. $\alpha_\pm$ can change drastically over the course of training. The decrease of $\alpha_+$ suggests the estimated $e_\nu$ is more aligning with the behavior in the dataset and therefore leading to lower regularization. $\alpha_+$'s increase means the estimated $e_\nu$ diverges away from dataset's behavior, leading to higher regularization. Due to the numeric stability, we limited the $e_\nu$ value for estimation $\beta$ to be in $[-0.2, 0.15]$. But different function approximations and gradient update format still lead to different estimated $\beta$ value.

\begin{table}[t]
\caption{Aplation study of base functions using 4-p}
\label{table:exp_adaf_base_func_ablation}
\begin{tabular}{lllll}
\toprule
 &  & Algorithm & Flex-f-Q & Flex-f-DICE \\
env & $g_+$ & $g_-$ &  &  \\
\midrule
\multirow[t]{8}{*}{hopper} & \multirow[t]{4}{*}{$\mathcal{X}^2$} & $\mathcal{X}^2$ & 54.7 & 75.4 \\
 &  & KL & 71.9 & \textbf{99.0} \\
 &  & Le-Cam & 49.2 & 94.4 \\
 &  & Hellinger & 55.2 & \textbf{100.6} \\
\cline{2-5}
 & \multirow[t]{4}{*}{KL} & $\mathcal{X}^2$ & 76.4 & 81.9 \\
 &  & KL & \textbf{80.8} & 72.3 \\
 &  & Le-Cam & 58.1 & 74.6 \\
 &  & Hellinger & 54.8 & 88.6 \\
\cline{1-5} \cline{2-5}
\multirow[t]{8}{*}{walker} & \multirow[t]{4}{*}{$\mathcal{X}^2$} & $\mathcal{X}^2$ & 45.4 & 97.6 \\
 &  & KL & 55.0 & 94.2 \\
 &  & Le-Cam & 46.0 & 97.8 \\
 &  & Hellinger & 48.7 & \textbf{110.0} \\
\cline{2-5}
 & \multirow[t]{4}{*}{KL} & $\mathcal{X}^2$ & 50.7 & 101.8 \\
 &  & KL & - & 95.1 \\
 &  & Le-Cam & \textbf{60.2} & 96.9 \\
 &  & Hellinger & 45.0 & 91.7 \\
\cline{1-5} \cline{2-5}
\bottomrule
\end{tabular}
\end{table}

We conducted an ablation study to investigate the effect of base functions using Hopper and Walker2D environments with \emph{4-p} datasets. For $g^*_+(\cdot)$, we considered $\mathcal{X}^2$ and KL; for $g^*_-(\cdot)$, we considered $\mathcal{X}^2$, KL, Le-Cam, and Hellinger. Table~\ref{table:exp_adaf_base_func_ablation} shows both \emph{Flex-$f$-Q} and \emph{Flex-$f$-DICE}'s performance. From the high level, Table~\ref{table:exp_adaf_base_func_ablation} shows that there is unlikely a universal function form that works well for every environments because the combination achieving the highest performance is different for each environment and algorithm, though Hellinger-$\mathcal{X}^2$ appears to be relatively consistent for \emph{Flex-$f$-DICE}. This supports our hypothesis that different environments and algorithms call for a different level of constraint and correspondingly different $f$-divergence as optimization objective. 

An interesting observation from Table~\ref{table:exp_adaf_base_func_ablation} is that using KL as $g^*_+(\cdot)$ for \emph{Flex-$f$-Q} only exploded when using KL as $g^*_-(\cdot)$, which is why there is no entry, but was still able to function using other functions as $g^*_-(\cdot)$. This suggests that a different $g^*_-(\cdot)$ can effectively alleviate the risk of Q value overestimation in the semi-gradient style optimization like IQL or \emph{Flex-$f$-Q}.
\section{Conclusion}

In this paper, we identify the challenge of offline RL using datasets with limited stochasticity and a high mixture of expertise levels. 
We propose a general form of LP with expanded options for $L_P$ and the removed constraint of $\zeta\geq 0$. 
We show that such an LP form for RL yields an unconstrained optimization problem that is equivalent to any RL algorithms relying on the Bellman error minimization. We also show a unified form of LP for RL, under which the existing dual optimization problem can be viewed as minimizing the sum of an initial loss term and a penalty function, under the Bellman constraint on either learned value function or density ratio estimation.
We further derived the general function form of the flexible $f$-divergence to provide alternative flexible constraints on the dataset support. Such a flexible $f$-divergence function form opens up future research opportunities of constructing a fully automated adaption algorithm to apply an adaptive constraint based on received data.
Experiment results empirically showed the correctness of the proposed LP formulation and the potential of flexible $f$-divergence in improving algorithm performance with identified challenging datasets. 

We note that using flexible $f$-divergence can introduce new hyperparameters, and deployed heuristic methods for adaptively estimating them during training. Our future work will aim to incorporate the optimization of $\alpha_\pm$ and $\beta$ directly into optimization algorithms, thereby converting the flexible $f$-divergence into a fully optimizable format.

%%%%%%%%%%%%%%%%%%%%%%%%%%%%%%%%%%%%%%%%%%%%%%%%%%%%%%%%%%%%%%%%%%%%%%%%

%%% The next two lines define, first, the bibliography style to be 
%%% applied, and, second, the bibliography file to be used.

\bibliographystyle{ACM-Reference-Format} 
\bibliography{main}

% For arxiv version
\newpage

\appendix

\section{RL Algorithms under General Constrained LP}

Here we describe the adopted RL algorithm for our experiments under the general constrained LP formulation for RL. We omit the input parameter for certain functions when it is duplicating previous statements. We also omit the input and output parameters for the policy since it is always $\pi(a|s)$.
We generalize the notation of the estimation of $e_\nu$ as $\hat{e}$, and use either $\hat{e} = e_\phi$ or $\hat{e} = Q_\phi-\nu_\theta$ to approximate its value. We consider the following optimization problem. 
\begin{align}
    \label{app_eq:gen_rl_apprx_lp_primal}
    &\min_{\theta, \phi} \mathbb{E}_{(s, a)\sim d^{\mathcal{D}}}[\alpha(s)\nu_\theta(s) + g(\hat{e}(s, a))]\\
    \label{app_eq:gen_rl_apprx_lp_primal_constraint}
    &\text{s.t. } \hat{e}(s,a) = e_\theta(s,a), \forall s, a.
\end{align}
The related optimization functions are as follow
\begin{align}
    \label{app_eq:gen_rl_apprx_lag_reord}
    & \max_\zeta[\min_\theta (\alpha\nu_\theta + \zeta\cdot e_\theta) - \max_\phi(\zeta\cdot \hat{e} - g(\hat{e}))] \\
    \label{app_eq:gen_rl_apprx_lag_conj}
    = & \max_\zeta\min_{\theta} \alpha\nu_\theta + \zeta\cdot e_\theta-g^*(\zeta) \\
    \label{app_eq:gen_rl_apprx_nu_final}
    = & \min_\theta \alpha\nu_\theta + g(e_\theta)
\end{align}
We restrict the selection of $LP$ to be either $(1-\gamma)p_0(s)\nu_\theta(s)$, $(1-\gamma)\nu_\theta(s)$, $-e_\theta(s,a)$, or $-\hat{e}(s,a)$. $(1-\gamma)\nu_\theta(s)$ is valid by setting $\alpha = 1-\gamma$, and it can be evaluated to $e_\nu$ under the assumption of $p_0(s)=1$ and the performance difference lemma~\cite{agarwal2019reinforcement}. The last two are valid because of the performance difference lemma and their estimation of true $e_\nu$. To ensure the consistency between compared algorithm, we set $LP$ for \emph{Flex-f-Q} as $\hat{e}(s,a)$, corresponding to IQL, and $LP$ for \emph{Flex-f-DICE} as $(1-\gamma)p_0(s)\nu_\theta(s)$, corresponding to OptiDICE.

For \emph{Flex-f-Q}, its pseudocode for optimization is Algorithm~\ref{algo:ada_f_q}, generalized from semi-gradient descent algorithms such as IQL~\cite{kostrikov2021offline} and $f$-DVL~\cite{sikchi2023dual}. It directly optimizes Eq.~\ref{app_eq:gen_rl_apprx_lp_primal} with $\hat{e}$, and ensure the equality constraint by minimizing the mean squared error between $\hat{e}$ and $e_\theta$. This is a semi-gradient descent optimization for $\nu_\theta$ since its target value is computed as the estimation from another model rather than $\nu_\theta$ itself. We then uses the exponential of estimated advantage for policy extraction in the format of advantage weighted regression~\cite{peng2019advantage}.

\begin{algorithm}[h]
    \caption{Constrained LP Q learning}
    \label{algo:ada_f_q}
    \begin{algorithmic}[1]
        \Require $\mathcal{D}, \nu_\theta, Q_\phi, \pi_\psi, \gamma$
        \For {$i=1, 2, \dots N$}
            \State Sample $\mathcal{B}=\{(s, a, r, s')\sim\mathcal{D}\}$
            \State $\hat{e}(s,a,s') = Q_\phi(s,a)-\nu_\theta(s)$
            \State Train $\nu_\theta$ by minimizing $\mathbb{E}_{\mathcal{B}}[-\hat{e}+g(\hat{e})]$
            \State $e_\theta(s,a,s')=r+\gamma\nu_\theta(s')-\nu_\theta(s)$
            \State Train $Q_\phi$ by minimizing $\mathbb{E}_{\mathcal{B}}[(\hat{e}-e_\theta)^2]$
            \State Train $\pi_\psi$ by minimizing $\mathbb{E}_{\mathcal{B}}[-\exp(\hat{e})\log\pi_\psi]$
        \EndFor
    \end{algorithmic}
\end{algorithm}

For \emph{Flex-f-DICE}, its pseudocode for optimization is Algorithm~\ref{algo:ada_f_dice}, generalized from DICE algorithms such as OptiDICE~\cite{lee2021optidice}. It optimizes $\nu_\theta$ through Eq.~\ref{app_eq:gen_rl_apprx_nu_final}. The Lagrange version of optimizing $e_\phi$ in OptiDICE would correspond to Eq.~\ref{app_eq:gen_rl_apprx_lag_reord}. We adopted the MSE version of optimizing $e_\phi$ as we empirically found it more stable. The policy extraction follows OptiDICE, which uses the ReLU wrapped predicted $\zeta^*(s,a)=g^{*'-1}(s, a)$ as weights for the AWR style policy extraction. This is a crucial difference as we only enforce the constraint of $\zeta\geq0$ during the policy extraction, but not during the training of either $\nu_\theta$ or $e_\phi$. $\mathcal{D}_0=\{s^i_0\}_{i=0}^N$ is the set of initial states for estimating the initial state distribution $p_0$.

\begin{algorithm}[h]
    \caption{Constrained LP DICE}
    \label{algo:ada_f_dice}
    \begin{algorithmic}[1]
        \Require $\mathcal{D}, \mathcal{D}_0=\{s^i_0\}_{i=0}^N, \nu_\theta, e_\phi, \pi_\psi, \gamma$
        \For {$i=1, 2, \dots N$}
            \State Sample $\mathcal{B}=\{(s, a, r, s')\sim\mathcal{D}\}$
            \State Sample $\mathcal{B}_0 = \{s_0 \sim \mathcal{D}_0\}$
            \State $e_\theta(s,a,s')=r+\gamma\nu_\theta(s')-\nu_\theta(s)$
            \State Train $\nu_\theta$ by minimizing $\mathbb{E}_{\mathcal{B}_0}[(1-\gamma)\nu_\theta]+\mathbb{E}_{\mathcal{B}}[g(e_\theta)]$
            \State Train $e_\phi$ by minimizing $\mathbb{E}_{\mathcal{B}}[(e_\phi-e_\theta)^2]$
            \State $w_\phi(s,a) = \max(g^{*'-1}(e_\phi(s,a)), 0)$
            \State Train $\pi_\psi$ by minimizing $\mathbb{E}_{\mathcal{B}}[-w_\phi\log\pi_\psi]$
        \EndFor
    \end{algorithmic}
\end{algorithm}

Algorithm~\ref{algo:ada_f_param_est} shows the pseudocode for estimating parameters, $\alpha_\pm$ and $\beta$, of Ada-$f$. We perform the parameter estimation using the predicted $\hat{e}$ from the optimization algorithm to avoid duplicated model forward calls. In practice, we would want to avoid the $\alpha_+$ to be too high. Therefore, we smooth the computation of cosine similarity $\delta$ by setting a lower bound $\iota_b$. We also clip the predicted estimation $\hat{e}$ to avoid it being out of the domain of $g(\cdot)$, like the conjugate function for the Le-Cam divergence. We also performed the exponential moving average to smooth the estimation. 

\begin{algorithm}[h]
    \caption{Ada-f parameter estimation}
    \label{algo:ada_f_param_est}
    \begin{algorithmic}[1]
        \Require $\mathcal{D}, \pi_b, \iota_b$
        \For {$i=1, 2, \dots N$}
            \State Sample $\mathcal{B}=\{(s, a, r, s')\sim\mathcal{D}\}$
            \State Train $\pi_b$ by minimizing $\mathbb{E}_{\mathcal{B}}[-\log\pi_b]$
            \State Compute $\hat{e}$ from training with Algo.~\ref{algo:ada_f_q} or~\ref{algo:ada_f_dice} and $\mathcal{B}$
            \State $\delta = \cos(\overrightarrow{\pi_b(a|s)}, \overrightarrow{\exp(\hat{e}(s, a)})\cdot(1-\iota_b) + \iota_b$
            \State $\alpha_+ = \frac{1}{\delta}$
            \State $\alpha_- = \frac{1}{1-\delta}$
            \State $\beta = \mathbb{E}_{\mathcal{B}}[\hat{e}]$
        \EndFor
    \end{algorithmic}
\end{algorithm}

\section{Equivalence of Existing RL algorithm in the General constrained LP formulation}

Here we show some examples of existing RL algorithm in the equivalent general constrained LP formulation, with the incorporation of Ada-$f$. We start from the unconstrained minimization problem Eq.~\ref{app_eq:gen_rl_apprx_nu_final}, which we have shown to share the same solution to the constrained format from Proposition 1.

\subsection{XQL~\cite{garg2023extreme}}

The objective function for updating $V$ in XQL is

\begin{align}
    J_{\text{XQL}}(V) = \mathbb{E}_{(s, a)\sim\mathcal{D}}[e^{\hat{Q}(s,a)-V(s)} - (\hat{Q}(s,a)-V(s))-1]
\end{align}

Substituting $\hat{Q}(s,a)-V(s)$ with $e_\nu$ and selecting $g(\cdot)=\exp(\cdot)$, i.e. the conjugate function for KL-divergence function, we have

\begin{align}
    J_{\text{XQL}}(V) = \mathbb{E}_{(s, a)\sim\mathcal{D}}[g(e_\nu)-e_\nu] - 1
\end{align}
This is the same objective function as Eq.~\ref{app_eq:gen_rl_apprx_nu_final} by evaluating $\alpha\nu$ for $L_P$ as $-e_\nu$ and subtracting $1$. 

\subsection{IQL~\cite{kostrikov2021offline}}

The objective function for updating $V$ in IQL is

\begin{align}
    J_{\text{IQL}}(V) = \begin{cases}
        \tau(\hat{Q}(s, a)-V(s))^2 & \hat{Q}(s, a)-V(s) \geq 0 \\
        (1-\tau)(\hat{Q}(s, a)-V(s))^2 & \hat{Q}(s, a)-V(s) < 0 \\
    \end{cases}
\end{align}

We can select $L_P = -e_\nu$, picking both $\bar{g}^*_{\pm}$ as $\chi^2$-divergence function, $\beta=1$, $\alpha_+=\frac{1}{\tau}$ and $\alpha_-=\frac{1}{1-\tau}$. We solve for the $\alpha_+$ side first, following the conjugate function, which yields

\begin{align}
    J_{\text{IQL}}(V; e_\nu\geq0) & = -e_\nu + e_\nu\cdot g^{*'-1}(\frac{e_\nu}{\alpha_+}) \notag\\
    & - g^*(g^{*'-1}(\frac{e_\nu}{\alpha_+})) \\
    & = -e_\nu + e_\nu \cdot (\frac{e_\nu}{\alpha_+}+1) \notag \\
    & -\alpha_+\frac{1}{2}(\frac{e_\nu}{\alpha_+}+1 - 1)^2 \\
    & = \frac{1}{\alpha_+}\frac{1}{2}e_\nu^2
\end{align}

We can derive the same equation for $\alpha_-$. Substituting 

we can rewrite Eq~\ref{app_eq:gen_rl_apprx_nu_final} with $\alpha_+=\frac{1}{\tau}$ and $\alpha_-=\frac{1}{1-\tau}$, 

\begin{align}
    J_{\text{IQL}}(V) = \begin{cases}
        \tau\frac{1}{2}e_\nu^2 & e_\nu \geq 0 \\
        (1-\tau)\frac{1}{2}e_\nu^2 & e_\nu < 0 \\
    \end{cases}
\end{align}

This is equivalent to IQL's objective with the coefficient of $\frac{1}{2}$.

\section{Implementation details}

Here we describe the implementation details of compared algorithms, including model architectures and other hyperparameters.

\begin{table}[h]
\caption{IQL hyperparameters}
\label{table:iql_hyper}
\centering
\begin{tabular}{llllr}
\toprule
 & $\tau$ & $\beta$ & $c_r$ & $B$ \\
Environment &  &  &  &  \\
\midrule
Ant-v4 & 0.7 & 5.0 & 0.257 & 256 \\
HalfCheetah-v4 & 0.9 & 3.0 & 0.080 & 256 \\
Hopper-v4 & 0.7 & 3.0 & 0.300 & 256 \\
Walker2d-v4 & 0.7 & 5.0 & 0.217 & 256 \\
FetchPickAndPlace-v2 & 0.7 & 5.0 & 1.000 & 512 \\
FetchPush-v2 & 0.7 & 5.0 & 1.000 & 256 \\
AdroitHandPen-v1 & 0.7 & 1.0 & 1.000 & 256 \\
AdroitHandHammer-v1 & 0.7 & 1.0 & 1.000 & 256 \\
\bottomrule
\end{tabular}
\end{table}

\textbf{IQL} in our experiments uses $2$ dense layers with sizes of $256$ and ReLU activation function for critic, value function, and actor model architectures. We trained the critic and the value function using the Adam optimizer with a learning rate of $3\times 10^{-4}$. We use the same optimizer except the $\alpha=0.001$ for the actor optimizer's cosine decay learning rate scheduler since we trained the algorithm for $1\times 10^{6}$ gradient steps. We use a larger batch size for \emph{FetchPickAndPlace-v2}. Table~\ref{table:iql_hyper} lists the corresponding values of expectile $\tau$, temperature $\beta$, reward scale $c_r$, and batch size $B$ for each environment. We keep other hyperparameters and training settings the same as the original paper~\citep{kostrikov2021offline}.

\textbf{Flex-f-Q} uses a reward scale of $0.1$ for Mujoco and AdroitHand environments, and $1.0$ for Fetch environments. All $\bar{g}^*_{+}$ is $\chi^2$ and $\bar{g}^*_{-}$ is Le-Cam. The lowest similarity bound $\iota_b$ for computing $\alpha_\pm$ is $0.3$ and the clipping of $\hat{e}$ for computing $\beta$ is $[-0.2, 0.15]$. The batch size is 512 for all environments. All other parameters are the same as IQL.

\textbf{OptiDICE} in our experiments uses $2$ dense layers with sizes of $256$ and ReLU activation function for $\nu_\theta$, $e_\phi$, and actor model architectures. We trained all models using the Adam optimizer with a learning rate of $3\times 10^{-4}$, with separated optimizers, one for each model. We picked $\alpha = 0.1$ and soft-$\chi^2$ divergence for all environments. We adopted the MSE optimization for $e_\phi$ and the AWR style policy extraction. The batch size for all environments is 512. The reward scale is $1.0$. We performed standardization on the observation values as we find it improve OptiDICE's performance.

\textbf{Flex-f-DICE} uses a reward scale $1.0$ for all environments, but with the global weighting of $g(\cdot)$ as $\alpha=0.1$ to be consistent with OptidICE. All $\bar{g}^*_{+}$ is $\chi^2$ and $\bar{g}^*_{-}$ is Le-Cam, except for Mujoco environments where $\bar{g}^*_{-}$ is KL. The lowest similarity bound $\iota_b$ for computing $\alpha_\pm$ is $0.3$ and the clipping of $\hat{e}$ for computing $\beta$ is $[-0.2, 0.15]$. The batch size is 512 for all environments. All other parameters are the same as OptiDICE.

\textbf{TD3BC} in our experiments uses $2$ dense layers with sizes of $256$ and ReLU activation function for both critic and actor models. We trained all models using the Adam optimizer with a learning rate of $3\times 10^{-4}$, with separated optimizers, one for each model. The temperature for updating the policy is $2.5$. The action noise sampling for $Q$-value update has a $0.2$ standard deviation and is clipped between $\pm0.5$. The batch size follows the same setting as for IQL. The reward scale is $1.0$. 

\textbf{CQL} in our experiments uses $2$ dense layers with sizes of $256$ and ReLU activation function for both critic and actor models. We trained all models using the Adam optimizer with a learning rate of $3\times 10^{-4}$, with separated optimizers, one for each model. We deployed CQL with the maximum $Q$ backup and the importance sampling for updating $Q$ values. The batch size follows the same setting as for IQL. The reward scale is $1.0$. For the AdroitHand environments, we deployed the lagrange version of CQL to prevent exploding estimation of $Q$ values.

\begin{table*}[t!]
\caption{Full normalized Returns with standard deviations}
\label{table:exp_adaf_normalized_return_full_std}
\begin{tabular}{llllllll}
\toprule
 & & CQL & TD3BC & IQL & Flex-$f$- & Opti- & Flex-$f$- \\
env & dataset &  &  &  & Q & DICE & DICE \\
\midrule
\multirow[t]{3}{*}{hopper} & 4-p & 37.6 $\pm$ 9.34 & 32.8 $\pm$ 20.68 & 62.8 $\pm$ 9.83 & 76.4 $\pm$ 5.94 & 77.9 $\pm$ 10.94 & 99.0 $\pm$ 8.52 \\
 & 2-p & 34.8 $\pm$ 2.09 & 23.5 $\pm$ 12.68 & 100.7 $\pm$ 2.73 & 106.0 $\pm$ 2.58 & 45.2 $\pm$ 19.60 & 40.8 $\pm$ 18.31 \\
 & 10-p & 20.0 $\pm$ 14.04 & 36.6 $\pm$ 8.72 & 63.5 $\pm$ 6.67 & 68.5 $\pm$ 5.82 & 89.9 $\pm$ 8.81 & 94.5 $\pm$ 3.90 \\
\cline{1-8}
\multirow[t]{3}{*}{walker} & 4-p & 9.9 $\pm$ 1.69 & 9.3 $\pm$ 3.87 & 50.3 $\pm$ 6.05 & 50.7 $\pm$ 3.76 & 75.7 $\pm$ 13.49 & 94.2 $\pm$ 5.48 \\
 & 2-p & 4.3 $\pm$ 0.30 & 17.2 $\pm$ 14.32 & 85.6 $\pm$ 13.38 & 88.3 $\pm$ 10.03 & 130.6 $\pm$ 2.49 & 130.3 $\pm$ 3.36 \\
 & 10-p & 83.4 $\pm$ 12.57 & 89.5 $\pm$ 7.59 & 101.2 $\pm$ 6.39 & 102.0 $\pm$ 3.45 & 79.8 $\pm$ 10.74 & 86.3 $\pm$ 5.93 \\
\cline{1-8}
\multirow[t]{3}{*}{ant} & 4-p & -13.8 $\pm$ 27.97 & 53.7 $\pm$ 14.54 & 113.8 $\pm$ 5.88 & 123.1 $\pm$ 4.43 & 126.0 $\pm$ 6.53 & 136.0 $\pm$ 2.36 \\
 & 2-p & 67.0 $\pm$ 4.38 & 17.3 $\pm$ 37.05 & 120.3 $\pm$ 7.63 & 146.1 $\pm$ 3.43 & 126.6 $\pm$ 4.12 & 129.3 $\pm$ 10.06 \\
 & 10-p & 25.8 $\pm$ 11.08 & 32.3 $\pm$ 25.87 & 92.7 $\pm$ 0.73 & 91.1 $\pm$ 1.57 & 74.9 $\pm$ 6.00 & 81.5 $\pm$ 1.48 \\
\cline{1-8}
\multirow[t]{3}{*}{halfcheetah} & 4-p & 5.1 $\pm$ 2.69 & 20.9 $\pm$ 2.59 & 52.3 $\pm$ 1.42 & 54.7 $\pm$ 3.08 & 15.9 $\pm$ 3.22 & 47.0 $\pm$ 2.70 \\
 & 2-p & 3.5 $\pm$ 1.39 & 19.2 $\pm$ 2.11 & 44.9 $\pm$ 1.34 & 42.6 $\pm$ 4.60 & 13.4 $\pm$ 2.22 & 40.1 $\pm$ 13.96 \\
 & 10-p & 16.3 $\pm$ 24.04 & 72.9 $\pm$ 2.06 & 78.2 $\pm$ 0.15 & 80.3 $\pm$ 0.35 & 80.5 $\pm$ 0.57 & 79.4 $\pm$ 0.71 \\
\cline{1-8}
\multirow[t]{3}{*}{push} & 4-p & 74.7 $\pm$ 1.87 & 98.9 $\pm$ 1.22 & 91.6 $\pm$ 2.45 & 92.4 $\pm$ 1.59 & 81.9 $\pm$ 2.79 & 88.3 $\pm$ 4.42 \\
 & 2-p & 81.5 $\pm$ 3.37 & 64.9 $\pm$ 5.46 & 95.2 $\pm$ 1.70 & 95.2 $\pm$ 2.17 & 87.4 $\pm$ 3.60 & 93.0 $\pm$ 1.87 \\
 & 10-p & 76.1 $\pm$ 3.77 & 88.1 $\pm$ 10.59 & 84.1 $\pm$ 2.30 & 81.3 $\pm$ 3.60 & 69.5 $\pm$ 2.14 & 68.4 $\pm$ 3.78 \\
\cline{1-8}
\multirow[t]{3}{*}{pickandplace} & 4-p & 65.1 $\pm$ 3.77 & 70.7 $\pm$ 5.35 & 95.7 $\pm$ 1.33 & 97.0 $\pm$ 1.25 & 87.2 $\pm$ 1.36 & 90.3 $\pm$ 2.50 \\
 & 2-p & 49.0 $\pm$ 3.39 & 48.9 $\pm$ 4.73 & 92.0 $\pm$ 3.75 & 92.7 $\pm$ 2.78 & 91.6 $\pm$ 1.31 & 93.2 $\pm$ 2.14 \\
 & 10-p & 9.2 $\pm$ 2.90 & 43.3 $\pm$ 3.37 & 45.2 $\pm$ 3.76 & 45.3 $\pm$ 4.03 & 26.6 $\pm$ 4.06 & 37.9 $\pm$ 3.12 \\
\cline{1-8}
\multirow[t]{2}{*}{pen} & cloned & -1.1 $\pm$ 1.02 & - & 56.2 $\pm$ 5.77 & 58.9 $\pm$ 1.53 & 22.9 $\pm$ 4.07 & 40.4 $\pm$ 4.36 \\
 & human & 6.7 $\pm$ 3.42 & - & 52.9 $\pm$ 3.30 & 53.4 $\pm$ 2.99 & 10.2 $\pm$ 10.46 & 39.6 $\pm$ 4.96 \\
\cline{1-8}
\multirow[t]{2}{*}{hammer} & cloned & 0.3 $\pm$ 0.09 & - & 2.2 $\pm$ 0.74 & 0.9 $\pm$ 0.17 & 0.2 $\pm$ 0.09 & 1.3 $\pm$ 0.75 \\
 & human & 1.1 $\pm$ 0.35 & - & 1.0 $\pm$ 0.56 & 1.9 $\pm$ 0.60 & 3.4 $\pm$ 0.80 & 0.9 $\pm$ 0.29 \\
\cline{1-8}
\bottomrule
\end{tabular}
\end{table*}

\section{Environments and Datasets}

\subsection{Experiment environments}

We provide additional description and corresponding experiment settings for each family of environments in our experiments. 

\paragraph{Gym-MuJoCo}\citep{todorov2012mujoco} is a set of widely included testbed environments for both online and offline reinforcement learning. We choose four continuous control tasks, \emph{Hopper-v4}, \emph{Walker2d-v4}, \emph{Ant-v4}, and \emph{HalfCheetah-v4}. They consist of manipulating different simulated 3D robots to move forward in the environment while maintaining a healthy state, except for \emph{HalfCheetah-v4} which is always healthy. The trajectory will terminate when the robot is in an unhealthy state so there is a healthy reward at each time step as long as the trajectory goes on. This creates a weak correspondence between the length of the trajectory and its noiseless rating for \emph{Hopper-v4}, \emph{Walker2d-v4} and \emph{Ant-v4} environments. We collected trajectories truncated at $1000$ steps except those terminated early.

\paragraph{Fetch}\citep{plappert2018multi} is a set of goal-oriented robotic environments with a sparse reward when the robot satisfies the goal condition. We include two Fetch environments in our study, i.e. \emph{FetchPush-v2} and \emph{FetchPickAndPlace-v2}. Their high level objectives are to control a simulated Fetch Mobile Manipulator robot arm with a two-fingered gripper and move an object to the designated goal location under certain restrictions. In our study, the trajectory in these environments never terminates and will only be truncated when reaching the maximum time steps (50). So the robot arm needs to maintain its endpoint or the object at the goal location to receive maximum return. Following \citet{plappert2018multi}'s setting, we assign $0$ as the reward when the robot arm agent meets the goal condition and $-1$ as the reward otherwise. Comparing to the traditional $1$-$0$ reward scheme for the goal-oriented environment, it allows the trajectory return and consequently the corresponding rating to reflect the velocity of the agent satisfying the goal condition. 

\paragraph{AdroidHand}\citep{rajeswaran2017learning} environments simulate a Shadow Dexterous Hand robot performing various tasks. The task is considered successful when certain goal conditions are met for that task. For \emph{AdroitHandPen-v1}, the task is to reposition a pen object to the target orientation. For \emph{AdroitHandHammer-v1}, the task is to control the simulated robot hand to grab a hammer and use that hammer to drive a nail to a board. In our study, the trajectory in these environments never terminates and will only be truncated when reaching the maximum time steps (200). While these environments can be categorized as goal-oriented, we adopt the dense reward version of included environments due to their high difficulty. 

\subsection{Dataset collection}

We collected new datasets for included Mujoco and Fetch environments. We trained the behavior policy using the Soft Actor Critic (SAC) algorithm~\citep{haarnoja2018soft} following D4RL's data collection procedure except for the policy's variance. For goal-oriented environments in Fetch, we used Hindsight Experience Replay (HER)~\citep{andrychowicz2017hindsight} with the future resampling strategy and resampling size $k=4$. 
We limited the explorative behavior of the policy during data collection by fixing the variance of the policy distribution to be $0.0$ as fully deterministic policies. We constructed datasets by mixing data collected from 2, 4, and 10 behavior policies at different expertise levels, referred to as \emph{2-p}, \emph{4-p}, and \emph{10-p} respectively.  
\begin{enumerate}
    \item \emph{2-p} follows the same composition as in D4RL that contains 50\% of the data from a policy at 30\% to 50\% of expert performance level and 50\% of the data from the expert policy as the converged optimal policy during training.%, referred to as \emph{2-p}.
    \item \emph{4-p} contains data from the expert policy, \emph{sub-optimal} at 60\%, \emph{medium} at 30\%, and \emph{low} at 10\% of the expert performance.%, referred to as \emph{4-p}.
    \item \emph{10-p} contains data from 10 equally distributed checkpoints of a policy trained to near-expert level (at 90\% of expert performance).%, referred to as \emph{10-p}.
\end{enumerate}

\section{Additional results}

Table~\ref{table:exp_adaf_normalized_return_full_std} shows full experiment results with standard deviation. It also includes additional experiment results from CQL~\cite{kumar2020conservative} and TD3BC~\cite{fujimoto2021minimalist}. We excluded TD3BC's results in AdroidHand environments because we could not complete its training due to the exploding $Q$-value estimation.

%\bibliography{main}

%%%%%%%%%%%%%%%%%%%%%%%%%%%%%%%%%%%%%%%%%%%%%%%%%%%%%%%%%%%%%%%%%%%%%%%%

\end{document}